\definecolor{darkred}{rgb}{0.5,0,0.1}
\definecolor{windowblue}{rgb}{0.21,0.47,0.75}
\newtheorem{lemma}{Lemma}
\newtheorem{theorem}[lemma]{Theorem}
\newtheorem{assu}{Assumption}
\renewenvironment{proof}{{\bfseries Proof.}}{\hfill $\blacksquare$}
\newcommand{\beq}{\begin{equation}}
\newcommand{\eeq}{\end{equation}}
\newcommand{\beqa}{\begin{eqnarray}}
\newcommand{\eeqa}{\end{eqnarray}}
\newcommand{\beqas}{\begin{eqnarray*}}
\newcommand{\eeqas}{\end{eqnarray*}}
\newcommand{\bit}{\begin{itemize}}
\newcommand{\eit}{\end{itemize}}
\newcommand{\benum}{\begin{enumerate}}
\newcommand{\eenum}{\end{enumerate}}
\newcommand{\comment}[1]{}
\newcommand{\mmp}[1]{\textcolor{darkred}{\em MMP: #1}}
\newcommand{\yuchaz}[1]{\textcolor{windowblue}{\em Yuchia: #1}}
\renewcommand{\mmp}[1]{} \renewcommand{\yuchaz}[1]{}
\newcommand{\diffmapalg}{{\sc DiffMap}}
\newcommand{\lapalg}{{\sc Laplacian}}
\newcommand{\rmetricalg}{{\sc RMetric}}
\newcommand{\coordsearch}{{\sc IndEigenSearch}}
\newcommand{\greedycoordsearch}{{\sc GreedyIndEigenSearch}}
\newcommand{\llrcoordsearch}{{\sc LLRCoordSearch}}
\newcommand{\regusearch}{{\sc ReguParamSearch}}
\newcommand{\grad}{\operatorname{grad}}
\newcommand{\rrr}{{\mathbb R}}
\newcommand{\T}{\mathcal T}
\newcommand{\M}{\mathcal{M}}
\newcommand{\id}{g}
\newcommand{\gpush}{g_{*\phi}}
\newcommand{\gpushs}{g_{*\phi_S}}
\newcommand{\g}{\gpush}
\newcommand{\loss}{{\mathfrak L}}
\newcommand{\risk}{{\mathfrak R}}
\newcommand{\Vol}{\operatorname{Vol}}
\newcommand{\diag}{\operatorname{diag}}
\newcommand{\rank}{\operatorname{rank}}
\DeclareMathOperator*{\argmax}{argmax} 
\newcommand{\volnorm}{\Vol_\mathrm{norm}}
\newcommand{\inv}[1]{\frac{1}{#1}}
\newcommand{\vect}[1]{\mathbf{\bm{#1}}}
\newcommand{\tilj}{\tilde{\jmath}}
\newcommand{\tilH}{\tilde{\vect{H}}}
\newcommand{\tilG}{\tilde{\vect{G}}}
\newcommand{\xx}{\vect{x}}
\newcommand{\yy}{\vect{y}}
\newcommand{\jacobian}{\textbf{\textsf{D}}}
\newcommand\scirc[1][.7]{\mathbin{\vcenter{\hbox{\scalebox{#1}{$\circ$}}}}}
\newcommand{\suppnumberprefix}{S}
\def\titlename{Selecting the independent coordinates of manifolds with large aspect ratios}
\title{\titlename}
\author{
	Yu-Chia Chen \\
	Department of Electrical \& Computer Engineering \\
	University of Washington \\
	Seattle, WA 98195 \\
	\texttt{yuchaz@uw.edu} \\
	\And 
	Marina Meil\u{a} \\
	Department of Statistics \\
	University of Washington \\
	Seattle, WA 98195 \\
	\texttt{mmp2@uw.edu} \\
}
\begin{document}
\maketitle

\begin{abstract}
Many manifold embedding algorithms fail apparently when the data manifold has a large aspect ratio (such as a long, thin strip). Here, we formulate success and failure in terms of finding a smooth embedding, showing also that the problem is pervasive and more complex than previously recognized. Mathematically, success is possible under very broad conditions, provided that embedding is done by carefully selected eigenfunctions of the Laplace-Beltrami operator $\Delta$. Hence, we propose a bicriterial {\em Independent Eigencoordinate Selection (IES)} algorithm that selects smooth embeddings with few eigenvectors. The algorithm is grounded in theory, has low computational overhead, and is successful on synthetic and large real data.
\end{abstract}

\section{Motivation}
\label{sec:motivation}
We study a well-documented deficiency
of manifold learning algorithms. Namely, as shown in
\cite{goldberg08}, algorithms that find their output $\vect{Y}=\varphi(\vect{X})$ by
minimizing a quadratic form under some normalization constraints, fail
spectacularly when the data manifold has a large aspect ratio, that
is, it extends much more in one direction than in others, as the long,
thin strip illustrated in 
Figure \ref{fig:cont-long-strip-example}. This class, called {\em output
normalized (ON)} algorithms, includes Locally Linear Embedding
(LLE), Laplacian Eigenmap, Local Tangent Space Alignment (LTSA),
Hessian Eigenmaps (HLLE), and Diffusion maps.
The problem, often observed in practice, was formalized in
\cite{goldberg08} as failure to find an $\vect{Y}$ affinely
equivalent to $\vect{X}$. They give sufficient conditions for failure,
using a linear algebraic perspective. The conditions show that,
especially when noise is present, the problem is pervasive.

In the present paper, we revisit the problem from a {\em differential
  geometric} perspective.  First, we define failure not as distortion,
but as drop in the {\em rank} of the mapping $\phi$ represented by
the embedding algorithm. In other words, the algorithm fails when the
map $\phi$ is not invertible, or, equivalently, when the dimension
$ \dim \phi(\M)<\dim \M=d$, where $\M$ represents the idealized
data manifold, and $\dim$ denotes the intrinsic dimension. Figure
\ref{fig:cont-long-strip-example} demonstrates that the problem is
fixed by choosing the eigenvectors with care. In fact, as we show
  in Section \ref{sec:theory}, it is known \cite{bates:16} that for
  the DM and LE algorithms, under mild geometric conditions, one
  can {\em always} find a finite set of $m$ eigenfunctions that
  provide a smooth $d$-dimensional map. We call this problem the {\em
  Independent Eigencoordinate Selection} (IES) problem, formulate it
and explain its challenges in Section \ref{sec:problem-formulation}.

Our second main contribution (Section \ref{sec:algorithm}) is to design a bicriterial method that will select from a set of {\em coordinate functions} $\phi_1,\ldots \,\phi_m$, a subset $S$ of small size that provides a smooth full-dimensional embedding of the data. The IES problem requires searching over a combinatorial number of sets. We show (Section \ref{sec:algorithm}) how to drastically reduce the computational burden per set for our algorithm. 
Third, we analyze the proposed criterion under asymptotic limit (Section \ref{sec:theory}). 
Finally (Section \ref{sec:experiments}), we show examples of
successful selection on real and synthetic data. The experiments also
demonstrate that users of manifold learning for other than toy data {\em must} be aware of the IES problem and have tools for handling it.
Notations table, proofs, a library of hard examples, extra experiments and analyses are in Supplements A--G; Figure/Table/Equation references with prefix \suppnumberprefix~are in the Supplement.

\section{Background on manifold learning}
\label{sec:background-ml-alg}
{\bf Manifold learning  (ML) and intrinsic geometry~}
Suppose we observe data $\vect{X}\in\rrr^{n\times D}$, with data points denoted by $\vect{x}_i \in \rrr^D ~\forall~i\in [n]$, that are sampled
from a {\em smooth}\footnote{In this paper, a smooth function or manifold will be assumed to be of class at least ${\mathcal C}^3$.} $d$-dimensional submanifold $\M \subset \rrr^D$. Manifold Learning algorithms map $\xx_{i},i\in[n]$ to $\vect{y}_i=\phi(\xx_i)\in\rrr^s$, where $d\leq s \ll D$, thus reducing the dimension of the data $\vect{X}$ while preserving (some of) its properties.
Here we present the LE/DM algorithm, 
but our results can be applied to
other ML methods with slight modification. 

The first two steps of LE/DM \cite{coifman:06,nadler:06} algorithms are generic; 
they are performed by
most ML algorithms.
First we encode the neighborhood relations in a {\em neighborhood graph}, 
which is an undirected graph $G(V, E)$ with vertex set
$V$ be the collection of 
all points $i\in[n]$ and edge set $E$ be the collections of tuple $(i, j) \in V^2$ such 
that $i$ is $j$'s neighbor (and vice versa). Common methods for building neighborhood
graphs include $r$-radius graph and $k$-nearest neighbor graph. Readers are encouraged to 
refer to \cite{HeinAL:07,TingHJ:10} for details. In this paper, $r$-radius graph are considered,
for which principled methods to select the neighborhood size and dimension exist.
For such method, the edge set $E$ of the neighborhood graph is constructed as follow:
$E = \{(i, j) \in V^2: \|\vect{x}_i - \vect{x}_j\|_2 \leq r\}$.
Closely related to the neighborhood graph is the {\em kernel matrix}
$\vect{K} \in \rrr^{n \times n}$ whose elements are
$K_{ij} = \exp\left( -\tfrac{\|\vect{x}_i-\vect{x}_{j}\|}{\varepsilon^2}\right)  \; \text{if } (i, j)\in E \text{ and } 0 \text{ otherwise}$.
Typically, the radius $r$ and the {\em bandwidth} parameter
$\varepsilon$ are related by $r=c \varepsilon$ with $c$ a small
constant greater than 1, e.g., $c\in [3, 10]$. This ensures that $\vect{K}$ is close to its limit
when $r\rightarrow \infty$ while remaining sparse, with sparsity
structure induced by the neighborhood graph. 
Having obtained the kernel matrix $\vect{K}$,
we then construct the {\em renormalized graph Laplacian} matrix $\vect{L}$ \cite{coifman:06}, also called the 
{\em sample Laplacian}, or {\em Diffusion Maps Laplacian}, by the following:
$\vect{L} = \vect{I}_n -  \tilde{\vect{W}}^{-1}\vect{W}^{-1}\vect{K}\vect{W}^{-1},$
where $\vect{W} = \diag(\vect{K}\vect{1}_n)$ with $\vect{1}_n$ be all one vectors
and $\tilde{\vect{W}} = \diag\left(\vect{W}^{-1}\vect{K}\vect{W}^{-1}\vect{1}_n\right)$. 
The method of constructing $\vect{L}$ as described above
guarantees that if the data are sampled from a manifold $\M$, $\vect{L}$
converges to $\Delta_\M$ \cite{HeinAL:05,TingHJ:10}. A summary of the
construction of $\vect{L}$ can be found in Algorithm \ref{alg:graph-laplacian-alg} \lapalg.
The last step of LE/DM algorithm embeds the data by solving the
minimum eigen-problem of 
$\vect{L}$.
The desired $m$ dimensional embedding
coordinates are obtained from the second to $m+1$-th principal
eigenvectors of graph Laplacian $\vect{L}$, with $0=\lambda_0< \lambda_1\leq\ldots \leq\lambda_m$, i.e., $\vect{y}_i=(\phi_1(\xx_i),\ldots \phi_m(\xx_i))$ (see also Supplement \ref{sec:pseudocodes}).

To analyze ML algorithms, it is useful to consider the limit of the mapping $\phi$ when the data is the entire manifold $\M$. We denote this limit also by $\phi$, and its image by $\phi(\M)\in\rrr^m$. For standard algorithms such as LE/DM, it is known that this limit exists \cite{coifman:06,belkin:07,HeinAL:05,HeinAL:07,TingHJ:10}.
One of the fundamental requirements of ML is to preserve the neighborhood relations in the original data. In mathematical terms, we require that $\phi:\M\rightarrow\phi(\M)$ is a {\em smooth embedding}, i.e., that $\phi$ is a smooth function (i.e. does not break existing neighborhood relations) whose Jacobian $\jacobian\phi(\xx)$ is full rank $d$ at each $\xx\in\M$ (i.e. does not create new neighborhood relations). 

\paragraph{The pushforward Riemannian metric}
A smooth $\phi$ does not typically
preserve geometric quantities such as distances along curves in $\M$. These
concepts are captured by {\em Riemannian geometry}, and
 we additionally assume that $(\M,\id)$ is a {\em Riemannian
  manifold}, with the metric $\id$ induced from $\rrr^D$. One can
always associate with $\phi(\M)$ a Riemannian metric $\gpush$, called
the {\em pushforward Riemannian metric} \cite{LeeJohnM.2003Itsm}, which preserves the geometry of $(\M,\id)$; $\gpush$ is defined by
\beq \label{eq:rmetric0}
\langle \vect{u}, \vect{v}\rangle_{\g(\xx)} = \left\langle \jacobian\phi^{-1}(\xx)\vect{u}, \jacobian\phi^{-1}(\xx)\vect{v} \right\rangle_{g(\xx)} 
\text{ for all } \vect{u}, \vect{v}\in\T_{\phi(\vect{x})}\phi(\M)	
\eeq
In the above, $\T_\vect{x}\M$, $\T_{\phi(\vect{x})}\phi(\M)$ are tangent subspaces, $\jacobian\phi^{-1}(\vect{x})$ maps vectors from
$\T_{\phi(\vect{x})}\phi(\M)$ to $\T_\vect{x}\M$, and $\langle ,
\rangle$ is the Euclidean scalar product.
For each $\phi(\vect{x}_i)$, the associated push-forward Riemannian
metric expressed in the coordinates of $\rrr^m$, is a symmetric,
semi-positive definite $m\times m$ matrix $\vect{G}(i)$ of rank
$d$. The scalar product $\langle \vect{u},\vect{v} \rangle_{\g(\xx_i)}$ takes
the form $\vect{u}^\top\vect{G}(i)\vect{v}$.  Given an embedding $\vect{Y} = \phi(\vect{X})$, $\vect{G}(i)$ can be estimated  by Algorithm \ref{alg:rmetric-alg} (\rmetricalg) of \cite{2013arXiv1305.7255P}.
The \rmetricalg~algorithm also returns the {\em co-metric} $\vect{H}(i)$, which is the pseudo-inverse of the metric $\vect{G}(i)$, and its Singular Value Decomposition $\vect{\Sigma}(i),\vect{U}(i)\in\mathbb R^{m\times d}$. The latter represents an orthogonal basis of $\T_{\phi(\vect{x})}(\phi(\M))$.
\begin{algorithm}[!htb]
    \SetKwInOut{Input}{Input}
	\SetKwInOut{Output}{Return}
	\SetKwComment{Comment}{$\triangleright $\ }{}
    \Input{Embedding $\vect{Y}\in \mathbb R^{n\times m}$, Laplacian $\vect{L}$, intrinsic dimension $d$}
    \For{all $\vect{y}_i \in \vect{Y}, k = 1\to m, l = 1\to m$}{
		$[\tilde{\vect{H}}(i)]_{kl} = \sum_{j\neq i} L_{ij} (y_{jl} - y_{il})(y_{jk} - y_{ik})$
    }
    \For{$i = 1\to n$}{
		$\vect{U}(i)$, $\vect{\Sigma}(i) \gets $ {\sc ReducedRankSVD}$(\tilde{\vect{H}}(i), d)$ \\
		$\vect{H}(i) = \vect{U}(i)\vect{\Sigma}(i)\vect{U}(i)^\top$\\
    $\vect{G}(i) = \vect{U}(i)\vect{\Sigma}^{-1}(i)\vect{U}(i)^\top$ \\
    }
    \Output{$\vect{G}(i),\vect{H}(i) \in \mathbb R^{ m\times m}$, $\vect{U}(i)\in \mathbb R^{m\times d}$,  $\vect{\Sigma}(i)\in \mathbb R^{d\times d}$, for $i\in [n]$}
    \caption{\rmetricalg}
    \label{alg:rmetric-alg}
\end{algorithm}

\section{IES problem, related work, and challenges}
\label{sec:problem-formulation}
\paragraph{An example}
\begin{wrapfigure}[12]{r}{0.5\textwidth}
    \centering
    \vspace{-22pt}
    \subfloat[][]
    {\includegraphics[width=0.48\linewidth]{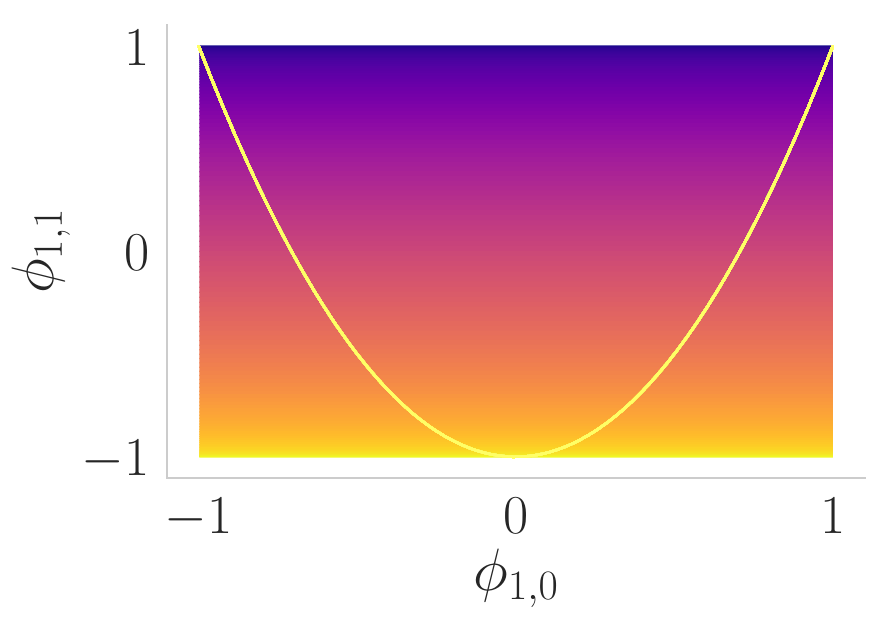}
    \label{fig:cont-long-strip-27}}\hfill
    \subfloat[][]
    {\includegraphics[width=0.48\linewidth]{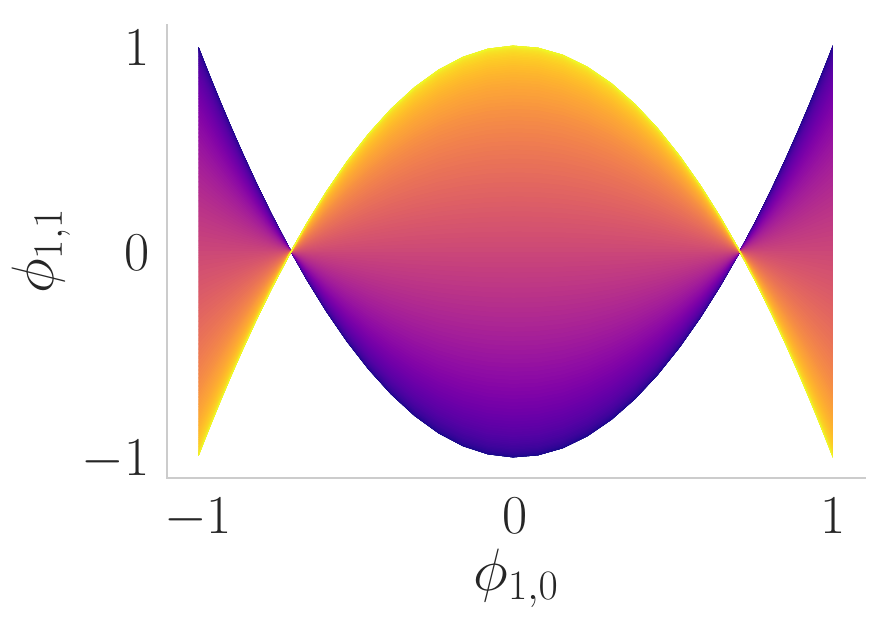}
    \label{fig:cont-long-strip-29}}\hfill   
    \caption{(a) Eigenfunction $\phi_{1, 0}$ versus ${\phi}_{2, 0}$ (curve) or ${\phi}_{0, 1}$ (two dimensional manifold). (b) Eigenfunction $\phi_{1, 0}$ versus ${\phi}_{1, 1}$. All three manifolds are colored by the parameterization $h$.}
    \label{fig:cont-long-strip-example}
\end{wrapfigure}
Consider a continuous
two dimensional strip with width $W$, height $H$, and {\em aspect
  ratio} $W/H\geq 1$, parametrized by coordinates $w\in[0,W], h\in[0,H]$. The
eigenvalues  and eigenfunctions
of the Laplace-Beltrami operator
$\Delta$ with von Neumann boundary conditions \cite{strauss2007partial} are
$
	\lambda_{k_1, k_2} = \left(\tfrac{k_1\pi}{W}\right)^2 + \left(\tfrac{k_2\pi}{H}\right)^2$, respectively 
$
	\phi_{k_1, k_2}(w,h) = \cos\left(\frac{k_1\pi w}{W}\right)\cos\left(\frac{k_2\pi h}{H}\right)
$.
Eigenfunctions $\phi_{1, 0}$, $\phi_{0, 1}$
are in bijection with the $w,h$ coordinates (and give a full rank
embedding), while the mapping by $\phi_{1, 0},\phi_{2, 0}$ provides no
extra information regarding the second dimension $h$ in the underlying
manifold (and is rank 1). 
Theoretically, one can choose as coordinates eigenfunctions indexed by
$(k_1,0),(0,k_2)$,
but, in practice, $k_1$, and $k_2$ are usually
unknown, as the eigenvalues are index by their rank $0 = \lambda_{0} <
\lambda_{1}\leq \lambda_{2} \leq\cdots$.
For a two dimensional strip,
it is known \cite{strauss2007partial} that $\lambda_{1, 0}$ always corresponds to
$\lambda_{1}$ and $\lambda_{0, 1}$ corresponds to $\lambda_{(\lceil
  W/H\rceil)}$. Therefore, when $W/H> 2$, the mapping of the strip to
$\rrr^2$ by $\phi_1,\phi_2$ is low rank, while the mapping by
$\phi_1,\phi_{\lceil W/H\rceil}$ is full rank. Note that other
mappings of rank 2 exist, e.g., $\phi_1,\phi_{\lceil W/H\rceil + 2}$
($k_1 = k_2 = 1$ in Figure \ref{fig:cont-long-strip-29}).  These
embeddings reflect progressively higher frequencies, as the corresponding
eigenvalues grow larger.

\label{sec:ies-problem}
\paragraph{Prior work} \cite{goldberg08} is the first work to give the IES problem a rigurous analysis. Their paper focuses on rectangles, and the failure illustrated in Figure \ref{fig:cont-long-strip-27} is defined as obtaining a mapping $\vect{Y} = \phi(\vect{X})$ that is not {\em affinely equivalent} with the original data. They call this the {\em Price of Normalization} and explain it in terms of the variances along $w$ and $h$.
\cite{dsilva2018parsimonious} is the first to frame the failure in terms of the rank of $\phi_S=\{\phi_k: k\in S\subseteq [m]\}$, calling it the {\em repeated eigendirection problem}. They propose a heuristic, \llrcoordsearch, based on the observation that if $\phi_k$ is a
repeated eigendirection of $\phi_1,\cdots, \phi_{k-1}$, one can fit $\phi_k$ with {\em local linear regreesion} on predictors $\phi_{[k-1]}$ with low leave-one-out errors $r_k$. 

\paragraph{Existence of solution} Before trying to find an algorithmic
solution to the IES problem, we ask the question whether this is even
possible, in the smooth manifold setting. Positive answers are given
in \cite{Portegies:16}, which proves that isometric embeddings by DM with finite
$m$ are possible, and more recently in \cite{bates:16}, which proves
that any closed, connected Riemannian manifold $\M$ can be smoothly
embedded by its Laplacian eigenfunctions $\phi_{[m]}$ into $\rrr^m$
for some $m$, which depends only on the intrinsic dimension $d$ of
$\M$, the volume of $\M$, and lower bounds for {\em injectivity
radius} and {\em Ricci curvature}. The example in Figure \ref{fig:cont-long-strip-27} 
demonstrates that, typically, not all $m$ eigenfunctions are
needed. I.e., there exists a set $S\subset [m]$, so that
$\phi_S$ is also a smooth embedding. We 
follow \cite{dsilva2018parsimonious} in calling such a set $S$ {\em
  independent}. It is not known how to find an independent $S$
analytically for a given $\M$, except in special cases such as the
strip. In this paper, we propose a {\em finite sample} and
algorithmic solution, and we support it with asymptotic theoretical
analysis.

\paragraph{The IES Problem} We are given data $\vect{X}$, and the output of
an embedding algorithm (DM for simplicity) $\vect{Y}=\phi(\vect{X})
= [\phi_1,\cdots,\phi_m] \in\rrr^{n\times m}$. We assume that
$\vect{X}$ is sampled from a $d$-dimensional manifold $\M$, with known
$d$, and that $m$ is sufficiently large so that $\phi(\M)$ is a
smooth embedding. Further, we assume that there is a set $S\subseteq
[m]$, with $|S|=s\leq m$, so that $\phi_S$ is also a smooth
  embedding of $\M$.
We propose to find such set $S$ so that the rank of $\phi_S$ is $d$ on $\M$ and $\phi_S$ varies as slowly as possible.

\paragraph{Challenges}
(1) Numerically, and on a finite sample, distiguishing between a full rank mapping and a rank-defective one is imprecise. Therefore, we substitute for rank the volume of a unit parallelogram in $\T_{\phi(\xx_i)}\phi(\M)$. (2) Since $\phi$ is {\em not} an isometry, we must separate the local distortions introduced by $\phi$ from the estimated rank of $\phi$ at $\xx$.
(3) Finding the optimal balance between the above desired properties.
(4) In \cite{bates:16} it is strongly suggested that $s$ the number of eigenfunctions needed may exceed the {\em Whitney embedding dimension} ($\leq 2d$) \cite{LeeJohnM.2003Itsm}, and that this number may depend on injectivity radius, aspect ratio, and so on.  Section \ref{sec:hist-heuristic-check-s} shows an example of a flat 2-manifold, the {\em strip with cavity}, for which $s>2$. In this paper, we assume that $s$ and $m$ are given and focus on selecting $S$ with $|S|=s$ unless otherwise stated; for completeness, in Section \ref{sec:hist-heuristic-check-s} we present a heuristic to select $s$. 

\paragraph{(Global) functional dependencies, knots and crossings} Before we proceed, we describe three different ways a mapping $\phi(\M)$ can fail to be invertible. The first, {\em (global) functional dependency} is the case when $\rank \jacobian \phi<d$ on an open subset of $\M$, or on all of $\M$ (yellow curve in Figure \ref{fig:cont-long-strip-27}); this is the case most widely recognized in the literature (e.g., \cite{goldberg08,dsilva2018parsimonious}). The {\em knot} is the case when $\rank \jacobian\phi<d$ at an isolated point (Figure \ref{fig:cont-long-strip-29}). Third, the {\em crossing} (Figure \ref{fig:d8-emb-rank-2} in Supplement \ref{sec:addi-exps}) is the case when $\phi:\M\rightarrow \phi(\M)$ is not invertible at $\xx$, but $\M$ can be covered with open sets $U$ such that the restriction $\phi:U\rightarrow \phi(U)$ has full rank $d$. Combinations of these three exemplary cases can occur. The criteria and approach we define are based on the (surrogate) rank of $\phi$, therefore they will not rule out all crossings. We leave the problem of crossings in manifold embeddings to future work, as we believe that it requires an entirely separate approach (based, e.g., or the injectivity radius or density in the co-tangent bundle rather than differential structure). 
\section{Criteria and algorithm}
\label{sec:algorithm}
\subsection{A geometric criterion}
  We start with the main idea in evaluating the quality of a subset
  $S$ of coordinate functions. At each data point $i$, we consider the
  orthogonal basis $\vect{U}(i)\in\mathbb R^{m\times d}$ of the $d$
  dimensional tangent subspace $\T_{\phi(\vect{x}_i)}\phi(\M)$. The projection of the columns of $\vect{U}(i)$ onto
  the subspace $\T_{\phi(\vect{x}_i)}\phi_S(\M)$ is $\vect{U}(i)[S,
    :]\equiv \vect{U}_S(i)$.  The following Lemma connects
  $\vect{U}_S(i)$ and the co-metric $\vect{H}_S(i)$ defined by
  $\phi_S$, with the {\em full} $\vect{H}(i)$.
\begin{lemma}
Let $\vect{H}(i) = \vect{U}(i)\vect{\Sigma}(i)\vect{U}(i)^\top $ be the co-metric defined by embedding $\phi$, $S\subseteq [m]$, $\vect{H}_S(i) $ and $\vect{U}_S(i)$ defined above.  Then $\vect{H}_S(i) = \vect{U}_S(i)\vect{\Sigma}(i)\vect{U}_S(i)^\top= \vect{H}(i)[S, S]$.
\label{lm:hs-is-submatrix-of-h}
\end{lemma}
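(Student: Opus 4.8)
The plan is to split the double equality into a purely linear-algebraic identity, $\vect{U}_S(i)\vect{\Sigma}(i)\vect{U}_S(i)^\top = \vect{H}(i)[S,S]$, and a geometric identity, $\vect{H}_S(i) = \vect{U}_S(i)\vect{\Sigma}(i)\vect{U}_S(i)^\top$, and to prove them separately. The crux is the geometric identity, which rests on the observation that the co-metric returned by \rmetricalg\ is \emph{entrywise local} in the embedding coordinates.

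The linear-algebraic identity is immediate. By definition $\vect{U}_S(i) = \vect{U}(i)[S,:]$ keeps only the rows of $\vect{U}(i)$ indexed by $S$, so for $k,l\in S$ the $(k,l)$ entry of $\vect{U}_S(i)\vect{\Sigma}(i)\vect{U}_S(i)^\top$ is $\sum_{a=1}^d U(i)_{ka}\Sigma(i)_{aa}U(i)_{la}$, which is exactly the $(k,l)$ entry of $\vect{U}(i)\vect{\Sigma}(i)\vect{U}(i)^\top = \vect{H}(i)$. That is, forming the $S\times S$ principal submatrix commutes with conjugation by a row selection, giving $\vect{U}_S(i)\vect{\Sigma}(i)\vect{U}_S(i)^\top = \vect{H}(i)[S,S]$.

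For the geometric identity I would run \rmetricalg\ on the sub-embedding $\phi_S$ and inspect its first loop: the unnormalized co-metric entry $[\tilde{\vect{H}}(i)]_{kl} = \sum_{j\neq i} L_{ij}(y_{jl}-y_{il})(y_{jk}-y_{ik})$ depends only on the two coordinates $k$ and $l$, so deleting all coordinates outside $S$ leaves the remaining entries untouched, yielding $\tilde{\vect{H}}_S(i) = \tilde{\vect{H}}(i)[S,S]$. Asymptotically $[\tilde{\vect{H}}(i)]_{kl}\to\langle\nabla\phi_k,\nabla\phi_l\rangle_g$, which both exhibits this locality and identifies $\tilde{\vect{H}}(i)$ with the rank-$d$ Gram matrix $JJ^\top$ of eigenfunction gradients, where $J$ is the $m\times d$ Jacobian of $\phi$ in an orthonormal frame of $\T_{\vect{x}_i}\M$; then $\tilde{\vect{H}}_S(i)=J_SJ_S^\top$ with $J_S=J[S,:]$, consistent with $\vect{U}(i)$ spanning $\T_{\phi(\vect{x}_i)}\phi(\M)$.

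The one step that needs care, and the main obstacle, is the rank-$d$ truncation in the second loop of \rmetricalg: I must show that restricting to $S$ and truncating to rank $d$ commute. Here I would use that $\vect{H}(i)=\vect{U}(i)\vect{\Sigma}(i)\vect{U}(i)^\top$ already has rank $d$, so its principal submatrix $\vect{H}(i)[S,S]=\vect{U}_S(i)\vect{\Sigma}(i)\vect{U}_S(i)^\top$ has rank at most $d$; hence {\sc ReducedRankSVD}$(\tilde{\vect{H}}_S(i),d)$ applied to $\tilde{\vect{H}}_S(i)=\tilde{\vect{H}}(i)[S,S]$ returns the matrix unchanged, so $\vect{H}_S(i)=\vect{H}(i)[S,S]$. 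Cleanest is to argue in the population limit, where $\tilde{\vect{H}}(i)=\vect{H}(i)$ has exact rank $d$ and the truncation is a no-op throughout; the finite-sample version requires reading $\vect{H}_S(i)$ as derived from the rank-$d$ $\vect{H}(i)$ rather than from a fresh full-rank estimate, a subtlety I would flag explicitly. Chaining this with the linear-algebraic identity completes the proof.
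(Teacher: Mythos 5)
Your proof is correct, but there is no proof in the paper to compare it against line by line: the authors declare the lemma ``straightforward and left to the reader.'' The intended argument is almost certainly the two-line congruence computation: write $\phi_S = \pi_S \circ \phi$ where $\pi_S$ is the linear coordinate-selection map whose matrix is the $s\times m$ submatrix of $\vect{I}_m$ with rows indexed by $S$; since the co-metric pushes forward by congruence under a smooth map, $\vect{H}_S(i) = \pi_S\, \vect{H}(i)\, \pi_S^\top = \vect{H}(i)[S,S]$, and substituting $\vect{H}(i)=\vect{U}(i)\vect{\Sigma}(i)\vect{U}(i)^\top$ yields $\vect{U}_S(i)\vect{\Sigma}(i)\vect{U}_S(i)^\top$. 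Your first half is exactly this linear algebra. Your second half reaches the same geometric identity by a longer route---through the entrywise locality of the \rmetricalg~estimator $[\tilde{\vect{H}}(i)]_{kl}=\sum_{j\neq i}L_{ij}(y_{jl}-y_{il})(y_{jk}-y_{ik})$ and its population limit $\langle \grad\phi_k,\grad\phi_l\rangle_g$---and this detour surfaces something the paper never makes explicit: in finite samples, the rank-$d$ truncation in \rmetricalg~does \emph{not} commute with coordinate restriction, so the co-metric obtained by running \rmetricalg~afresh on $\phi_S$ equals $\vect{H}(i)[S,S]$ only in the population limit (where $\tilde{\vect{H}}(i)$ has exact rank $d$ and truncation is a no-op), or if one defines $\vect{H}_S(i)$ as derived from the already-truncated $\vect{H}(i)$. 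The paper implicitly absorbs this by the phrase ``the co-metric defined by $\phi_S$,'' by Assumption \ref{assu:H} (consistency of the estimate of $\vect{H}_S$ computed w.r.t.~$\phi_S$), and by its downstream use of the identity (Lemma \ref{lm:change-of-variable} and the efficiency claim that $\vect{H}_S$ can be read off as a submatrix of $\vect{H}$). Your explicit flagging of that subtlety is therefore an improvement in rigor rather than a gap; the only criticism is that the mid-proof sentence asserting {\sc ReducedRankSVD} leaves $\tilde{\vect{H}}(i)[S,S]$ unchanged is, as you yourself then note, valid only under one of those two readings, so the proof should be organized to state the reading first and the no-op claim second.
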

The proof is straightforward and left to the reader. Note that Lemma \ref{lm:hs-is-submatrix-of-h} is responsible for the efficiency of the search over sets $S$, given that the push-forward co-metric $\vect{H}_S$ can be readily obtained as a submatrix of $\vect{H}$.
Denote by $\vect{u}^S_k(i)$ the $k$-th column of $\vect{U}_S(i)$. 
We further normalize each  $\vect{u}^S_k$ to length 1 and define the {\em normalized projected volume} 
$\volnorm(S,i) = \frac{\sqrt{\det(\vect{U}_S(i)^\top\vect{U}_S(i))}}{\prod_{k=1}^d \|\vect{u}^S_k(i)\|_2}$.
Conceptually, $\volnorm(S,i)$ is the volume spanned by a (non-orthonormal) ``basis'' of unit vectors in $\T_{\phi_S(\xx_i)}\phi_S(\M)$; $\volnorm(S,i)=1$ when $\vect{U}_S(i)$ is orthogonal, and it is 0 when $\rank \vect{H}_S(i)<d$. In Figure \ref{fig:cont-long-strip-27}, the $\volnorm(\{1, 2\})$ with $\phi_{\{1,2\}} = \{{\phi}_{1, 0}, {\phi}_{2, 0}\}$ is close to zero, since the projection of the two tangent vectors is parallel to the yellow curve; however $\volnorm(\{1, \lceil w/h\rceil\},i)$ is almost 1, because the projections of the tangent 
vectors $\vect{U}(i)$ will be (approximately) orthogonal. Hence, $\volnorm(S,i)$ away from 0 indicates a non-singular $\phi_S$ at $i$, and we use the average $\log \volnorm(S,i)$, which penalizes values near 0 highly, as the {\em rank quality} $\risk(S)$ of $S$. 

Higher frequency $\phi_S$ maps with high $\risk(S)$ may exist, being
either smooth, such as the embeddings of the strip mentioned
previously, or containing knots involving only small fraction of
points, such as $\phi_{\phi_{1, 0}, \phi_{1, 1}}$ in Figure
\ref{fig:cont-long-strip-27}.
To choose the lowest frequency, slowest varying smooth map, a
regularization term consisting of the eigenvalues $\lambda_k$, $k\in S$,
of the graph Laplacian $\vect{L}$ is added, obtaining the criterion
\begin{equation}
	\loss(S; \zeta) = \underbrace{\inv{n}\sum_{i=1}^n \log\sqrt{\det\left({\vect{U}}_S (i)^\top {\vect{U}}_S (i)\right)}}_{\risk_1(S) = \inv{n}\sum_{i=1}^n \risk_1(S; i)} - \underbrace{\inv{n}\sum_{i=1}^n\sum_{k=1}^d \log\|{\vect{u}}^S_k(i)\|_2}_{\risk_2(S) = \inv{n}\sum_{i=1}^n \risk_2(S; i)} - \zeta\sum_{k\in S} \lambda_k
	\label{eq:loss-func}
\end{equation}
\subsection{Search algorithm}
\begin{wrapfigure}[20]{r}{0.56\textwidth}
\vspace{-15pt}
\IncMargin{1.6em}
\begin{algorithm}[H]
    \setstretch{1.15}
    \SetKwInOut{Input}{Input}
    \SetKwInOut{Output}{Return}
    \SetKwComment{Comment}{$\triangleright $\ }{}
    \Indm
    \Input{Data $\vect{X}$, bandwith $\varepsilon$, intrinsic dimension $d$, embedding dimension $s$, regularizer $\zeta$}
    \Indp
    $\vect{Y}\in\mathbb R^{n\times m}, \vect{L}, \vect{\lambda}\in\mathbb R^m\gets$\diffmapalg$(\vect{X}, \varepsilon)$ \\
    $\vect{U}(i),\cdots,\vect{U}(n) \gets$\rmetricalg$(\vect{Y}, \vect{L}, d)$ \\
    \For{$S \in \{S' \subseteq [m]: |S'| = s, 1\in S'\}$}{
        $\risk_1(S) \gets 0; \risk_2(S) \gets 0$ \\
        \For{$i = 1,\cdots, n$}{
            $\vect{U}_S(i) \gets \vect{U}(i)[S, :]$ \\
            $\risk_1(S) \mathrel{+}= \inv{2n}\cdot \log\det\left(\vect{U}_S(i)^\top\vect{U}_S(i)\right)$\\
            $\risk_2(S) \mathrel{+}= \inv{n}\cdot \sum_{k=1}^d\log\|u^S_k(i)\|_2$
        }
        $\loss(S; \zeta) = \risk_1(S) - \risk_2(S) - \zeta\sum_{k\in S}\lambda_k$
    }
    $S_* = \argmax_S \loss(S;\zeta)$ \\
    \Indm
    \Output{Independent eigencoordinates set $S_*$}
    \Indp
    \caption{\coordsearch}
    \label{alg:indep-coord-search}
\end{algorithm}
\DecMargin{1.6em}
\end{wrapfigure}
With this criterion, the IES problem  turns into a subset
selection problem parametrized by $\zeta$
\begin{equation}
    S_*(\zeta) = \argmax_{S\subseteq[m]; |S|=s; 1\in S} \loss(S; \zeta)
    \label{eq:subset-selection-problem}
\end{equation}

Note that we force the first coordinate $\phi_1$ to always be chosen, since this coordinate cannot be functionally dependent on previous ones, and, in the case of DM, it also has lowest frequency.
Note also that $\risk_1$ and $\risk_2$ are both submodular set
function (proof in Supplement \ref{sec:submodularity-loss-func}). 
For large
$s$ and $d$, algorithms for optimizing over the difference of submodular functions can be 
used (e.g., see \cite{Iyer:2012:AAM:3020652.3020697}). For the experiments in this paper, 
we have $m = 20$ and $d, s = 2\sim 4$, which enables us to use exhaustive search
to handle \eqref{eq:subset-selection-problem}. 
The exact search algorithm is summarized in Algorithm
\ref{alg:indep-coord-search} \coordsearch.
A greedy variant is also proposed and analyzed in Supplement \ref{sec:greedy-algorithm}.

\subsection{Regularization path and choosing $\zeta$}
According to \eqref{eq:loss-func}, the optimal subset $S_*$
depends on the parameter $\zeta$. 
The regularization path  $\ell(\zeta) = \max_{S\subseteq [m]; |S|=s; 1\in S} \loss(S; \zeta)$ is the upper envelope of multiple lines (each correspond to a set $S$) with  slopes $-\sum_{k\in S}\lambda_k$ and intercepts $\risk(S)$. The larger $\zeta$ is, the more the 
 lower frequency subset penalty prevails, and for sufficiently large $\zeta$ the algorithm will output $[s]$. 
In the supervised learning framework, the regularization parameters are often chosen
by cross validation. Here we propose a second criterion, that effectively limits how much $\risk(S)$ may be ignored, or alternatively, bounds $\zeta$ by a data dependent quantity.
Define the {\em leave-one-out regret} of point $i$ as follows
\begin{equation}
	\mathfrak D(S, i) = \mathfrak R(S_*^i;  [n]\backslash \{i\}) - \mathfrak R(S; [n]\backslash \{i\}) \text{ with } S_*^i = \mathrm{argmax}_{S\subseteq [m]; |S|=s; 1\in S} \risk(S;i)
	\label{eq:adversarial-gain}
\end{equation}
In the above, we denote $\mathfrak R(S; T) = \inv{|T|} \sum_{i\in T} \risk_1(S; i) -\risk_2(S; i)$ for some subset $T\subseteq [n]$.
The quantity $\mathfrak D(S, i)$ in 
\eqref{eq:adversarial-gain} measures the gain in $\risk$ 
if all the other points $[n]\backslash \{i\}$ choose the optimal subset $S_*^i$.
If the regret $\mathfrak D(S, i)$ is larger than zero, it indicates
that the alternative choice might be better compared to original
choice $S$.  Note that the mean value for all $i$, i.e.,
$\inv{n}\sum_i\mathfrak D(S, i)$ depends also on the variability of
the optimal choice of points $i$, $S_*^i$.  Therefore, it might not
favor an $S$, if $S$ is optimal for every $i\in [n]$.  Instead, we
propose to inspect the distribution of $\mathfrak D(S,i)$, and remove
the sets $S$ for which $\alpha$'s percentile are larger than zero,
e.g., $\alpha = 75\%$, recursively from $\zeta=\infty$ in decreasing
order.
Namely, the chosen set is $S_* = S_*(\zeta')$ with $\zeta' = \max_{\zeta \geq 0} \textsc{Percentile}(\{\mathfrak D(S_*(\zeta), i)\}_{i=1}^n, \alpha) \leq 0$. 
The optimal $\zeta_*$ value is simply chosen to be the midpoint of all the $\zeta$'s that
outputs set $S_*$
i.e.,
$\zeta_* = \inv{2}\left(\zeta' + \zeta''\right)$, where
$\zeta'' = \min_{\zeta \geq 0} S_*(\zeta) = S_*(\zeta')$.
The procedure \regusearch~is summarized in Algorithm \ref{alg:choose-zeta}.

\begin{algorithm}[!htb]
    \setstretch{1.15}
    \SetKwInOut{Input}{Input}
    \SetKwInOut{Output}{Return}
    \SetKwComment{Comment}{$\triangleright $\ }{}
    \Input{Threshold parameter $\alpha$}
    \For{$\zeta = \zeta_\mathrm{\max} \to 0$}{
    	\Comment{$\zeta_\mathrm{\max}$ should be sufficiently large such that $S_*(\zeta_{\max}) = [s]$}
    	$S \gets S_*(\zeta)$; $S_*\gets $\texttt{NULL}; $\zeta''\gets$ \texttt{NULL} \\
        \For{$i\in [n]$}{
        	$\mathfrak D(S, i)\gets \risk(S^i_*; [n]\backslash \{i\}) - \risk(S; [n]\backslash \{i\})$ from equation \eqref{eq:adversarial-gain} \\
    	}
    	\uIf{$\textsc{Percentile}(\{\mathfrak D(S, i)\}_{i=1}^n, \alpha) \leq 0$ {\bf and}
    		$S_* = $ \texttt{NULL} }{
    		Optimal set $S_*\gets S$ \\
    		$\zeta' \gets \zeta$~~
			\Comment{First found a set that satisfies the criterion.}
    	}
    	\uElseIf{$S_*\neq$ \texttt{NULL} {\bf and} $S_* = S_*(\zeta)$}{
    		$\zeta''\gets \zeta$~~
			\Comment{Searching for $\zeta''$}
    	}
    	\uElseIf{$S_*\neq$ \texttt{NULL} {\bf and} 
    			 $\zeta''\neq$ \texttt{NULL} {\bf and } 
    			 $S_*\neq S_*(\zeta)$}{
    		$\zeta_* \gets \inv{2}(\zeta' + \zeta'')$ \\
    		{\bf break} ~~ \Comment{Leave the loop when found $\zeta'' = \min_{\zeta\geq 0} S_*(\zeta') = S_*(\zeta)$}
    	}
    	\Else{{\bf continue}}
    }
    \Output{Optimal set $S_*$, optimal regularization parameter $\zeta_*$}
    \caption{\regusearch}
    \label{alg:choose-zeta}
\end{algorithm}

\section{A heuristic to determine whether $s$ is sufficiently large}
\label{sec:hist-heuristic-check-s}
\begin{figure*}[t]
\centering

\subfloat[][]
{\includegraphics[width=0.24\linewidth]{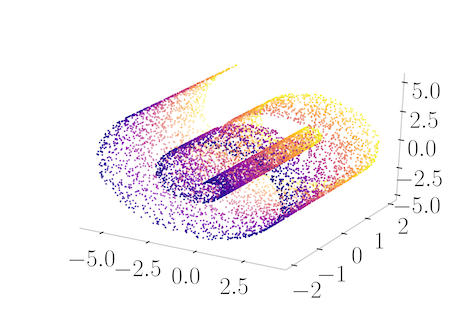}
\label{fig:d4-orig-data}}\hfill
\subfloat[][]
{\includegraphics[width=0.24\linewidth]{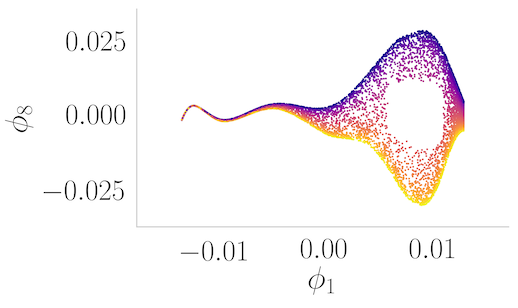}
\label{fig:d4-emb-rank-1}}\hfill
\subfloat[][]
{\includegraphics[width=0.24\linewidth]{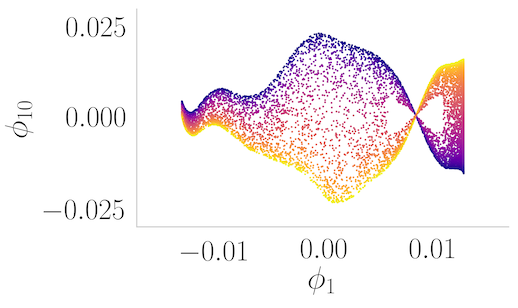}
\label{fig:d4-emb-rank-2}}\hfill
\subfloat[][]
{\includegraphics[width=0.24\linewidth]{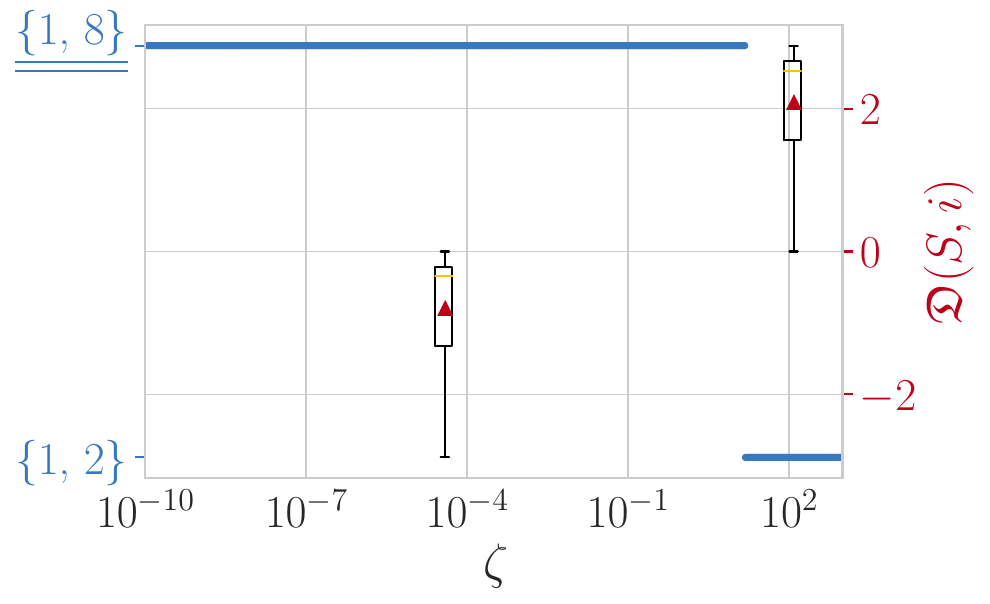}
\label{fig:d4-regu-path}}\hfill

\subfloat[][]
{\includegraphics[width=0.32\linewidth]{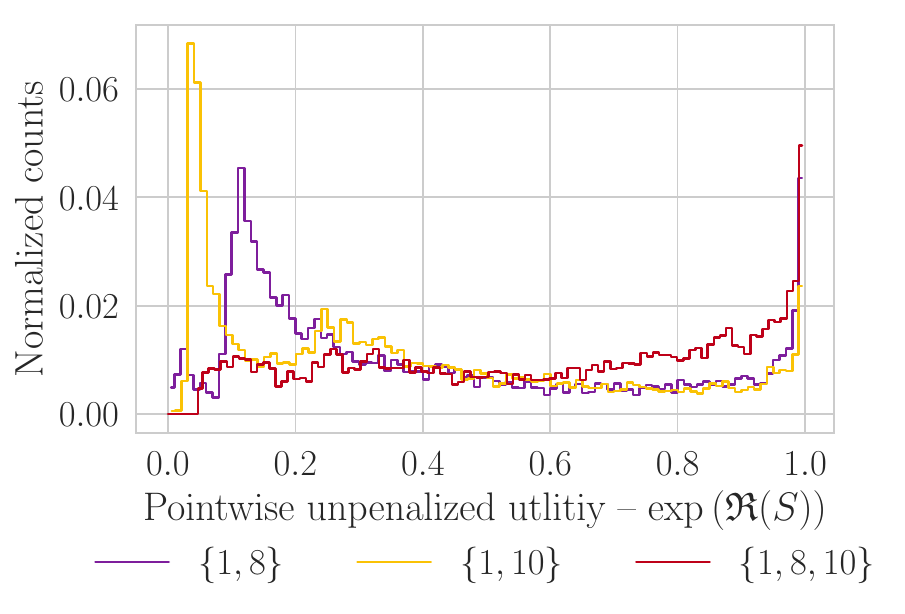}
\label{fig:d4-histogram-risk}}\hfill
\subfloat[][]
{\includegraphics[width=0.32\linewidth]{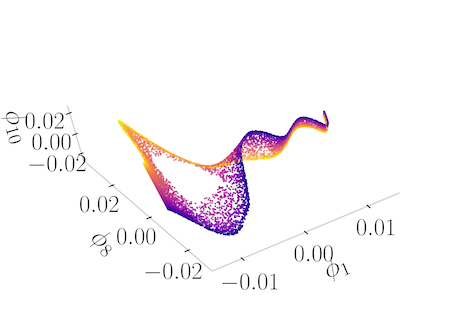}
\label{fig:d4-emb-1-8-10}}\hfill
\subfloat[][]
{\includegraphics[width=0.32\linewidth]{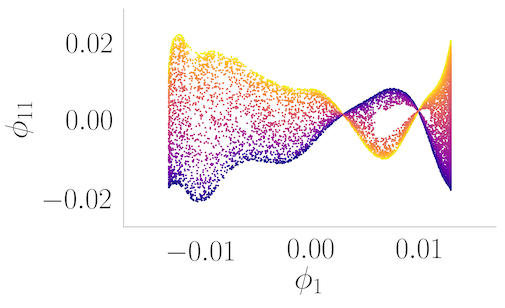}
\label{fig:d4-emb-rank-3}}\hfill
\caption{(a) Original data of $\mathcal D_4$, {\em swiss roll with hole} dataset. Embeddings with coordinate subset to be (b) $S = \{1, 8\}$, (c) $S = \{1, 10\}$, (f) $S = \{1, 8, 10\}$ and (g) $S = \{1, 11\}$ on $\mathcal D_4$. (e) Histogram of point-wise normalized projected volume on $\mathcal D_4$ for top two ranking of subsets (purple and yellow) and the union of two sets (red) obtain from \coordsearch~algorithm.}
\label{fig:d4-embedding-otherview}	
\end{figure*}

In this section, we propose a heuristic method to determine whether the given $s$ is large enough. 
Our method is based on the  histogram of  $\Vol_{\rm norm}(S,i)=\exp\left(\risk(S, i)\right)$,  the {\em normalized projected volume} of each point $i$. Recall that this volume is 
bounded between 0 and 1. Ideally, a perfect choice of cardinality $|S|$ will result in a 
concentration of mass in larger $\Vol_{\rm norm}$ region. The heuristic works as follow: at first
we check
the histogram of unpenalized $\Vol_{\rm norm}$ on the top few ranked subsets in terms of $\loss$.
If spikes in the small $\Vol_{\rm norm}$ regions are witnessed in the histogram, 
taking the union of the 
subsets and inspecting the histogram of unpenalized $\Vol_{\rm norm}$ on the combined set again.
If spikes in small $\Vol_{\rm norm}$ region diminished, one can conclude that a larger
cardinality size $|S|$ is needed for such manifold. 

We illustrate the idea on {\em swiss roll with hole} dataset in Figure \ref{fig:d4-orig-data}. 
Figure \ref{fig:d4-emb-rank-1} is the optimal subset of coordinates $S_* = \{1, 8\}$ selected
by the
proposed algorithm that best parameterize the underlying manifold. 
Figure \ref{fig:d4-regu-path} suggests one should eliminate $S_0 = \{1, 2\}$ 
because $\mathfrak D(S_0, i)\geq 0$ for all the points by \regusearch.
However, as shown in Figure \ref{fig:d4-emb-rank-1}, though it has low frequency and having
rank $2$ for most of the places, set $\{1, 8\}$ might not be suitable for data analysis for the very thin 
arms in left side of the embedding. 
Figure \ref{fig:d4-histogram-risk} is the histograms of the point-wise unpenalized $\Vol_{\rm norm}$
on different subsets. Purple and yellow curves correspond to the histogram of top two ranked 
subsets $S$ from \coordsearch. Both curves show a concentration of masses 
in small $\Vol_{\rm norm}$ region. The histogram of point-wise unpenalized $\Vol_{\rm norm}$
on $\{1, 8, 10\}$ (red curve), which is the union of the aforementioned two subsets, 
shows less concentration in the
small $\Vol_{\rm norm}$ region and implies that $|S| = 3$ might be a better choice for data analysis.
Figure \ref{fig:d4-emb-1-8-10} shows the embedding with coordinate $S = \{1, 8, 10\}$, 
which represents a two dimensional strip embedded in three dimensional space. 
The thin arc in Figure \ref{fig:d4-emb-rank-1} turns out to be a collapsed two 
dimensional manifold via projection, as shown in the upper right part of Figure 
\ref{fig:d4-emb-1-8-10} and left part of Figure \ref{fig:d4-emb-rank-2}. 
Here we have to restate that the embedding in Figure \ref{fig:d4-emb-rank-1}, although is a 
{\em degenerated} embedding, is still the best set one can choose for $s = 2$ such that 
the embedding varies slowest and has rank $2$. However, choosing $s=3$ might be better for 
data analysis.

\section{$\risk$ as Kullbach-Leibler divergence}
\label{sec:theory}
In this section we analyze $\risk$ in its population version, and show that it is reminiscent of a Kullbach-Leibler divergence between {\em unnormalized} measures on $\phi_S(\M)$. The population version of the regularization term takes the form of a well-known {\em smoothness} penalty on the embedding coordinates $\phi_S$. 

\paragraph{Volume element and the Riemannian metric}
Consider a Riemannian manifold $(\M,g)$ mapped by a smooth embedding $\phi_S$ into $(\phi_S(\M),\gpushs)$, $\phi_S:\M\rightarrow \rrr^s$, where $\gpushs$ is the {\em push-forward} metric defined in \eqref{eq:rmetric0}. A Riemannian metric $g$ induces a {\em Riemannian measure} on $\M$, with volume element $\sqrt{\det g}$. 
Denote now by $\mu_{\M}$, respectively $\mu_{\phi_S(\M)}$ the Riemannian measures corresponding to the metrics induced on $\M,\phi_S(\M)$ by the ambient spaces $\rrr^D,\rrr^s$; let $g$ be the former metric.  

\begin{lemma} Let $S, \phi, \phi_S, \vect{H}_S(\xx),\vect{U}_S(\xx),\vect{\Sigma}(\xx)$ be defined as in Section \ref{sec:algorithm} and Lemma \ref{lm:hs-is-submatrix-of-h}. For simplicity, we denote by $\vect{H}_S(\vect{y})\equiv \vect{H}_S(\phi_S^{-1}(\vect{y}))$, and similarly for $\vect{U}_S(\vect{y}),\vect{\Sigma}(\vect{y})$. Assume that $\phi_S$ is a smooth embedding. Then, for any measurable function $f:\M\rightarrow \rrr$,
\beq
\int_{\M} f(\xx)d\mu_{\M}(\xx)
=\!\int_{\phi_S(\M)}\!\!\!f(\phi_S^{-1}(\vect{y}))j_S(y)d\mu_{\phi_S(\M)}(\vect{y}),
\eeq
with 
\beq\label{eq:js}
j_S(\vect{y})\;=\;1/\Vol(\vect{U}_S(\vect{y})\vect{\Sigma}_S^{1/2}(\vect{y})).
  \eeq
\label{lm:change-of-variable}
\end{lemma}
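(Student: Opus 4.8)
The plan is to reduce the integral identity to a pointwise comparison of the two Riemannian volume elements, and then to identify the resulting density with $1/\Vol(\vect{U}_S\vect{\Sigma}^{1/2})$ by expressing the co-metric $\vect{H}_S$ through the Jacobian $\jacobian\phi_S$.

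First I would use the hypothesis that $\phi_S$ is a smooth embedding, so that $\phi_S:\M\to\phi_S(\M)$ is a diffeomorphism onto its image and the substitution $\vect{y}=\phi_S(\xx)$ is valid. Pulling the right-hand side back through $\phi_S$ shows that the claimed identity holds for every measurable $f$ if and only if, pointwise, $d\mu_\M=(j_S\circ\phi_S)\,\phi_S^{*}(d\mu_{\phi_S(\M)})$; that is, $j_S\circ\phi_S$ is the Radon--Nikodym density $d\mu_\M/d(\phi_S^{*}\mu_{\phi_S(\M)})$. In a local chart $(x^1,\dots,x^d)$ on $\M$ with metric matrix $g=(g_{\alpha\beta})$ we have $d\mu_\M=\sqrt{\det g}\,dx$, while parameterizing $\phi_S(\M)$ through the same chart by $x\mapsto\phi_S(x)\in\rrr^s$ gives it the ambient-induced (first fundamental form) metric $\jacobian\phi_S^{\top}\jacobian\phi_S$, so that $\phi_S^{*}(d\mu_{\phi_S(\M)})=\sqrt{\det(\jacobian\phi_S^{\top}\jacobian\phi_S)}\,dx$. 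Hence
\[
j_S(\phi_S(\xx))=\sqrt{\det g\,/\,\det(\jacobian\phi_S^{\top}\jacobian\phi_S)}.
\]

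The key step is to rewrite this denominator using the \rmetricalg~output. A short computation from the definition \eqref{eq:rmetric0} of the push-forward metric and $\vect{H}=\vect{G}^{\dagger}$ (equivalently, the gradient inner-product form $\vect{H}_{kl}=\langle\nabla_\M\phi_k,\nabla_\M\phi_l\rangle_g$) gives $\vect{H}=\jacobian\phi\,g^{-1}\jacobian\phi^{\top}$; restricting to rows/columns indexed by $S$ and using Lemma \ref{lm:hs-is-submatrix-of-h} yields $\vect{H}_S=\jacobian\phi_S\,g^{-1}\jacobian\phi_S^{\top}=\vect{U}_S\vect{\Sigma}\vect{U}_S^{\top}$. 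Setting $A=\vect{U}_S\vect{\Sigma}^{1/2}$ and $B=\jacobian\phi_S\,g^{-1/2}$ (both $s\times d$) we have $\vect{H}_S=AA^{\top}=BB^{\top}$ of rank $d$, so $A^{\top}A$ and $B^{\top}B$ are $d\times d$ with the same determinant, namely the product of the nonzero eigenvalues of $\vect{H}_S$. Therefore
\[
\Vol(\vect{U}_S\vect{\Sigma}^{1/2})^2=\det(A^{\top}A)=\det(B^{\top}B)=\det\!\big(g^{-1/2}\jacobian\phi_S^{\top}\jacobian\phi_S\,g^{-1/2}\big)=\frac{\det(\jacobian\phi_S^{\top}\jacobian\phi_S)}{\det g}.
\]
Comparing with the previous display gives $\Vol(\vect{U}_S\vect{\Sigma}^{1/2})=1/j_S(\phi_S(\xx))$, which is \eqref{eq:js}.

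I expect the main obstacle to be purely conceptual bookkeeping of the two metrics: $\mu_{\phi_S(\M)}$ is the measure of the metric that $\rrr^s$ induces on $\phi_S(\M)$ (the first fundamental form $\jacobian\phi_S^{\top}\jacobian\phi_S$), \emph{not} the push-forward metric $\gpushs$, so $j_S$ genuinely measures the volume distortion of $\phi_S$. Making this rigorous hinges on the identity $\vect{H}=\jacobian\phi\,g^{-1}\jacobian\phi^{\top}$ that links the \rmetricalg~co-metric to $\jacobian\phi$, together with the rank-$d$ eigenvalue exchange $AA^{\top}\leftrightarrow A^{\top}A$ that turns the pseudo-determinant of $\vect{H}_S$ into $\det(\jacobian\phi_S^{\top}\jacobian\phi_S)/\det g$; the remaining change-of-variables manipulations are routine.
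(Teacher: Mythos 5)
Your proof is correct, but it follows a genuinely different route from the paper's. The paper never introduces the Jacobian $\jacobian\phi_S$ or local charts: it first uses the fact that $(\M,g)$ and $(\phi_S(\M),\gpushs)$ are isometric by definition of the push-forward metric, so $\int_\M f\,d\mu_\M$ equals the integral of $f\circ\phi_S^{-1}$ against the Riemannian measure of $\gpushs$, and the density $j_S=\sqrt{\det\gpushs}$ then appears as the ratio of the volume elements of $\gpushs$ and of the ambient-induced metric on $\phi_S(\M)$. To evaluate it, the paper takes a QR factorization $\vect{U}_S=\vect{V}\vect{Q}_S$ with $\vect{V}$ orthogonal, writes the co-metric in the $\vect{V}$-basis as $\tilH_S=\vect{Q}_S\vect{\Sigma}\vect{Q}_S^\top$, and reads off the volume element $\det(\vect{Q}_S\vect{\Sigma}\vect{Q}_S^\top)^{-1/2}=1/\Vol(\vect{U}_S\vect{\Sigma}^{1/2})$. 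You instead pull both measures back to a chart on $\M$, identify $j_S\circ\phi_S=\sqrt{\det g/\det(\jacobian\phi_S^\top\jacobian\phi_S)}$, and convert this into the algorithm's quantities via the identity $\vect{H}_S=\jacobian\phi_S\,g^{-1}\jacobian\phi_S^\top=\vect{U}_S\vect{\Sigma}\vect{U}_S^\top$ together with the pseudo-determinant exchange $\det(A^\top A)=\det(B^\top B)$ whenever $AA^\top=BB^\top$ has rank $d$. Both arguments are sound. Your route buys an explicit geometric interpretation of $j_S$ as the volume distortion of $\phi_S$ between the two ambient-induced metrics, and it makes the role of the Jacobian transparent; the cost is an extra ingredient the paper does not need, namely the identity $\vect{H}=\jacobian\phi\,g^{-1}\jacobian\phi^\top$ (essentially the content of the \rmetricalg~construction of \cite{2013arXiv1305.7255P}), whose verification as the pseudo-inverse of $\vect{G}$ defined by \eqref{eq:rmetric0} deserves a line or two rather than being dismissed as a short computation. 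The paper's route stays entirely on the image side, uses only the outputs $\vect{U}_S,\vect{\Sigma}$ already in hand, and replaces your eigenvalue-exchange step by the QR decomposition, making it the more self-contained argument given the definitions already in place.
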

\begin{proof}
Let $\mu^*_{\phi_S(\M)}$ denote the Riemannian measure induced by $\gpushs$.
Since $(\M,g)$ and $(\phi_S(\M),\gpushs)$ are isometric by definition, $\int_{\M} f(\xx)d\mu_{\M}(\xx)=\int_{\phi_S(\M)}\!f(\phi_S^{-1}(\yy))d\mu^*_{\phi_S(\M)}(\yy)=\int_{\phi_S(\M)}$$f(\phi_S^{-1}(\yy))$$\sqrt{\det \gpushs(\yy)}d\mu_{\phi_S(\M)}(\yy)$ follows from the change of variable formula. It remains to find the expression of $j_S(\yy)=\sqrt{\det \gpushs(\yy)}$.
The matrix $\vect{U}_S(\yy)$ (note that $\vect{U}_S(\yy)$ is {\em not orthogonal}) can be written as
\beq \label{eq:uqr}
\vect{U}_S(\yy)\;=\;\vect{V}\vect{Q}_S(\yy)
\eeq
where $\vect{V}\in\rrr^{s\times d}$ is an orthogonal matrix and $\vect{Q}_S(\yy)\in\rrr^{d\times d}$ is upper triangular. Then,
\beq
\vect{H}_S(\yy)=\vect{U}_S(\yy)\vect{\Sigma}(\yy) \vect{U}_S(y)^\top=\vect{V}_S(y)\underbrace{(\vect{Q}_S(\yy)\vect{\Sigma}(\yy) \vect{Q}_S(\yy)^\top)}_{\tilH_S(\yy)}\vect{V}_S(\yy)^\top.
\eeq
In the above $\tilH_S(y)$ is the co-metric expressed in the new coordinate system induced by $\vect{V}_S(\yy)$. Hence, in the same basis, $\gpushs$ is expressed by 
\beq \label{eq:gpush-V}
\tilG_S(y)\,=\,\tilH_S(y)^{-1}\,=\;(\vect{Q}_S(\yy)\vect{\Sigma}(\yy) \vect{Q}_S(\yy)^\top)^{-1}.
\eeq
The volume element, which is invariant to the chosen coordinate system, is
\beq
\det\left(\vect{Q}_S(\yy)\vect{\Sigma}(\yy) \vect{Q}_S(\yy)^\top\right)^{-1/2}\;=\;\prod_{k=1}^d \sigma_k(\yy)^{-1/2}q_{S,kk}(\yy)^{-1}.
\eeq
From \eqref{eq:uqr}, it follows also that 
\beq \label{eq:volelem}
\det\left(\vect{Q}_S(\yy)\vect{\Sigma}(\yy) \vect{Q}_S(\yy)^\top\right)^{-1/2}=1/\Vol\left(\vect{U}_S(\yy)\vect{\Sigma(\yy)}^{1/2}\right)
\eeq
\end{proof}

\paragraph{Asymptotic limit of $\risk$}
We now study the first term of our criterion in the limit of infinite sample size. We make the following assumptions.
\begin{assu} \label{assu:S,compact} The manifold $\M$ is compact of class $\mathcal{C}^3$, and there exists a set $S$, with $|S|=s$ so that $\phi_S$ is a smooth embedding of $\M$ in $\rrr^s$.
\end{assu}
\begin{assu} \label{assu:p} The data are sampled from a distribution on $\M$ continuous with respect to $\mu_\M$, whose density is denoted by $p$. 
\end{assu}
\begin{assu} \label{assu:H}
The estimate of $\vect{H}_{S}$ in Algorithm \ref{alg:rmetric-alg} computed w.r.t. the embedding $\phi_{S}$ is consistent. 
\end{assu}
We know from \cite{bates:16}
that Assumption \ref{assu:S,compact} is satisfied for the DM/LE embedding. The remaining  assumptions are minimal requirements ensuring that limits of our quantities exist.  
Now consider the setting in Sections \ref{sec:ies-problem}, in which we have a
larger set of eigenfunctions, $\phi_{[m]}$ so that $[m]$ contains the
set $S$ of Assumption \ref{assu:S,compact}. Denote
by $\tilj_S(\vect{y})=\prod_{k=1}^d\left(||u_k^S(\vect{y})||\sigma_k(\vect{y}))^{1/2}\right)^{-1}$
a new volume element.

\begin{theorem}[Limit of $\risk$]\label{thm:risk-cont} Under Assumptions \ref{assu:S,compact}--\ref{assu:H},
  \beq
 \lim_{n\rightarrow\infty}\frac{1}{n}\sum_i\ln \risk(S,\xx_i)=\risk(S,\M),
  \eeq
and 
  \beq\label{eq:kl}
  \risk(S,\M)\,=\,-\int_{\phi_S(\M)} \ln \frac{j_S(\vect{y})}{\tilj_S(\vect{y})}p(\phi_S^{-1}(\vect{y}))j_S(\vect{y})d\mu_{\phi_S(\M)}(\vect{y})
\,\stackrel{\textit{def}}{=}\,-D(pj_S\|p\tilj_S)
\eeq
\end{theorem}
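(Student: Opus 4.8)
The plan is to prove the two displayed equations in order: first read the empirical average as a sample mean to which a law of large numbers applies, producing a population integral over $\M$; then transport that integral to $\phi_S(\M)$ via the change-of-variables formula of Lemma \ref{lm:change-of-variable}; and finally recognize the transported integrand as $-\ln(j_S/\tilj_S)$ weighted by the measure $p\,j_S$, i.e.\ as (minus) a Kullback--Leibler divergence between the unnormalized densities $p\,j_S$ and $p\,\tilj_S$ on $\phi_S(\M)$.

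First I would record the pointwise algebraic identity tying $\volnorm$ to $j_S/\tilj_S$. Since $\Vol(\vect{U}_S\vect{\Sigma}^{1/2})=(\det\vect{\Sigma})^{1/2}\sqrt{\det(\vect{U}_S^\top\vect{U}_S)}=\prod_{k}\sigma_k^{1/2}\sqrt{\det(\vect{U}_S^\top\vect{U}_S)}$ (this is exactly the quantity computed in \eqref{eq:volelem}), definition \eqref{eq:js} gives $j_S=\big(\prod_k\sigma_k^{1/2}\,\sqrt{\det(\vect{U}_S^\top\vect{U}_S)}\big)^{-1}$. Dividing by $\tilj_S=\prod_{k}(\|\vect{u}_k^S\|\sigma_k^{1/2})^{-1}$ cancels the factor $\prod_k\sigma_k^{1/2}$ and leaves $j_S/\tilj_S=\prod_k\|\vect{u}_k^S\|/\sqrt{\det(\vect{U}_S^\top\vect{U}_S)}=1/\volnorm(S,\cdot)$. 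Hence $\ln(j_S/\tilj_S)=-\ln\volnorm(S,\cdot)$, so the per-point summand $\ln\risk(S,\xx_i)=\ln\volnorm(S,\xx_i)$ in the statement is precisely $-\ln(j_S/\tilj_S)$ evaluated along $\phi_S$.

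For the limit, the data $\xx_i$ are i.i.d.\ from $p$ by Assumption \ref{assu:p}, and under Assumption \ref{assu:H} the \rmetricalg{} estimate of $\vect{H}_S(i)$, hence of $\volnorm(S,\xx_i)$, is consistent. Boundedness of the summand follows from Assumption \ref{assu:S,compact}: on a compact $\M$ with $\phi_S$ a smooth embedding, $\det(\vect{U}_S^\top\vect{U}_S)$ is continuous and strictly positive, hence bounded below, while $\|\vect{u}_k^S\|\le\|\vect{u}_k\|=1$ forces $0<\volnorm\le 1$ and thus $\ln\volnorm$ bounded. The strong law of large numbers then yields $\frac1n\sum_i\ln\volnorm(S,\xx_i)\to\int_\M\ln\volnorm(S,\xx)\,p(\xx)\,d\mu_\M(\xx)=:\risk(S,\M)$. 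Applying Lemma \ref{lm:change-of-variable} with $f=p\,\ln\volnorm(S,\cdot)$ rewrites this as $\int_{\phi_S(\M)}\ln\volnorm\cdot p\,j_S\,d\mu_{\phi_S(\M)}$, and substituting the identity of the previous paragraph turns it into $-\int_{\phi_S(\M)}\ln\frac{j_S}{\tilj_S}\,p\,j_S\,d\mu_{\phi_S(\M)}=-D(p\,j_S\|p\,\tilj_S)$, which is \eqref{eq:kl}.

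The delicate step is the limit itself, because each summand is a finite-sample \rmetricalg{} estimate rather than the population $\volnorm$, so the average runs over a triangular array instead of a fixed integrand. I would control this by splitting the summand into an estimation error, vanishing by Assumption \ref{assu:H}, and a sampling fluctuation handled by the SLLN; the cleanest route is to strengthen Assumption \ref{assu:H} to convergence uniform in $\xx$ over the compact $\M$, which together with the boundedness above permits exchanging the two limits. Establishing this uniformity, as opposed to mere pointwise consistency, is the main obstacle; the change of variables and the KL identification are then routine.
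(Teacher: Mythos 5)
Your proposal takes essentially the same route as the paper's proof: boundedness of the integrand via compactness and smoothness of the embedding, a law-of-large-numbers step giving $\frac{1}{n}\sum_i\ln\risk(S,\xx_i)\to\int_\M\ln\risk(S,\xx)\,p(\xx)\,d\mu_\M(\xx)$, the change of variables of Lemma \ref{lm:change-of-variable}, and the pointwise identity $\ln\volnorm(S,\cdot)=-\ln\left(j_S/\tilj_S\right)$ to obtain the Kullback--Leibler form. The one place you go beyond the paper is your final paragraph: the paper simply asserts the limit under Assumption \ref{assu:H}, whereas you correctly flag that the summands are finite-sample \rmetricalg{} estimates (a triangular array rather than a fixed integrand) and that a uniform-in-$\xx$ strengthening of Assumption \ref{assu:H} is what licenses exchanging the estimation and sampling limits.
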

\begin{proof}
Because $\phi_S$ is a smooth embedding, $j_S(\yy)>0$ on $\phi_S(\M)$, and because $\M$ is compact, $\min_{\phi_S(\M)}j_S(\yy)>0$. Similarly, noting that $\tilj_S(\yy)\geq \prod_{k=1}^d\sigma_k^{-1/2}(\yy)$, we conclude that $\tilj_S(\yy)$ is also bounded away from 0 on $\M$.  Therefore $\ln j_S(\yy)$ and $\ln \tilj_S(\yy)$ are bounded, and the integral in the r.h.s. of \eqref{eq:kl} exists and has a finite value. Now,
\beq
\frac{1}{n}\sum_i\ln \risk(S,\xx_i)\,\rightarrow\,\int_\M \ln\risk(S,\xx)p(\xx)d\mu_\M(\xx)\,=\,\risk(S,\M).
\eeq
\beqa
&&\int_\M \ln\risk(S,\xx)p(\xx)d\mu_\M(\xx) \\
& = &\int_{\phi_S(\M)} \ln \risk(\phi_S^{-1}(\yy))p(\phi_S^{-1}(\yy))j_S(\yy)d\mu_{\phi_S(\M)}(\yy)\nonumber\\
& = &\!\!\!\int_{\phi_S(\M)}\!\! \left[\frac{1}{2}\ln \frac{\Vol\left(\vect{U}^\top_S(\yy)\vect{U}_S(\yy)\right)}{\tilj_S(\yy)}
-\frac{p(\phi_S^{-1}(\yy)\prod_{k=1}^d\sigma_k^{1/2}(\yy)}{p(\phi_S^{-1}(\yy)\prod_{k=1}^d\sigma_k^{1/2}(\yy)}\right]
  p(\phi_S^{-1}(\yy))j_S(\yy)d\mu_{\phi_S(\M)}(\yy)\nonumber \\
& = &\int_{\phi_S(\M)} \ln \frac{j_S(\yy)p(\phi_S^{-1}(\yy)}{\tilj_S(\yy)p(\phi_S^{-1}(\yy)}p(\phi_S^{-1}(\yy))j_S(\yy)d\mu_{\phi_S(\M)}(\yy) 
\;=\;-D(pj_S\|p\tilj_S)
\eeqa
\end{proof}

Note that $D(\cdot\|\cdot)$ is a Kullbach-Leibler
divergence, where the measures defined by $pj_S,p\tilj_S$ normalize to
different values; because $j_S\geq \tilj_S$ the divergence $D$ is always positive.

It is known that $\lambda_k$, the $k$-th eigenvalue of the Laplacian, converges under certain technical conditions \cite{belkin:07} to an eigenvalue of the Laplace-Beltrami operator $\Delta_\M$ and that
\beq \label{eq:lamlim}
\lambda_k(\Delta_\M)
=\langle \phi_k, \Delta_\M\phi_k\rangle
=\int_\M\|\grad \phi_k(\xx)\|^2_2d\mu(\M).
\eeq
Hence, a smaller value for the regularization term encourages the use of slow varying coordinate functions, as measured by the squared norm of their gradients, as in equation \eqref{eq:lamlim}. Hence, under Assumptions \ref{assu:S,compact}, \ref{assu:p}, \ref{assu:H}, $\loss$ converges to 
\beq
\loss(S,\M)\;=\;-D(pj_S\|p\tilj_S)-\frac{\zeta}{\lambda_1(\M)}\sum_{k\in S}\lambda_k(\M).
\eeq
The rescaling of $\zeta$ in comparison with equation \eqref{eq:loss-func} aims to make $\zeta$ adimensional, whereas the eigenvalues scale with the volume of $\M$.

\section{Experiments}
\label{sec:experiments}
\begin{figure*}[t]
\centering
\begin{tabular}{cccc}
~& Original data $\vect{X}$ & Embedding ${\phi}_{S_*}$ & Regularization path \\
\raisebox{1.4cm}[0pt][0pt]{\rotatebox[origin=c]{90}{$\mathcal D_1$}} &
\includegraphics[width=0.24\textwidth]{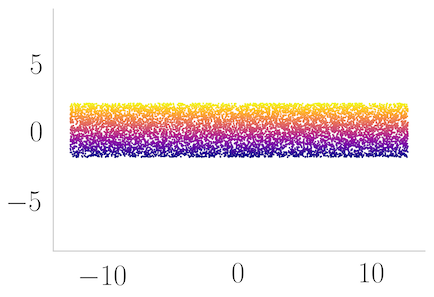}
\label{fig:d1-orig-data} &
\includegraphics[width=0.24\textwidth]{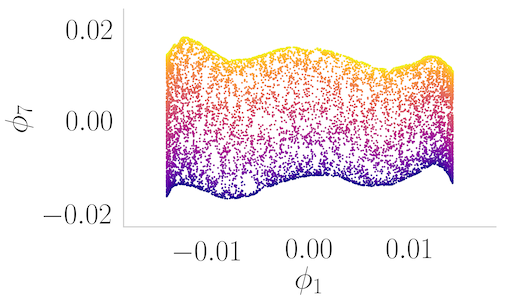}
\label{fig:d1-emb-rank-1} &
\includegraphics[width=0.24\textwidth]{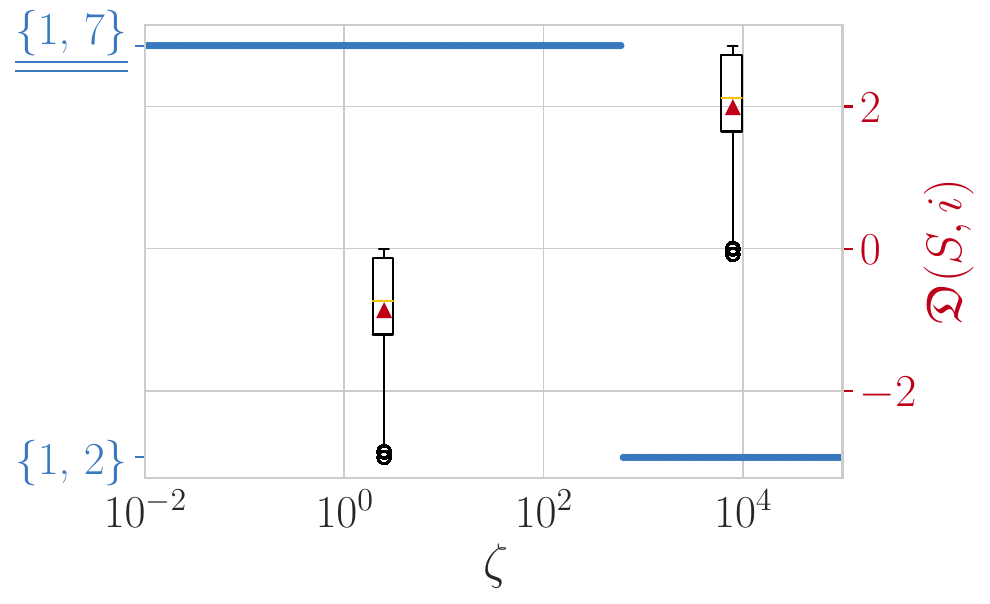}
\label{fig:d1-regu-path} \\

\raisebox{1.4cm}[0pt][0pt]{\rotatebox[origin=c]{90}{$\mathcal D_7$}} &
\includegraphics[width=0.24\textwidth]{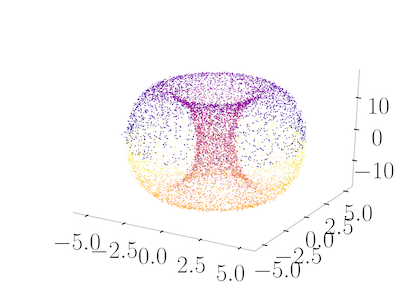}
\label{fig:d7-orig-data} & 
\includegraphics[width=0.24\textwidth]{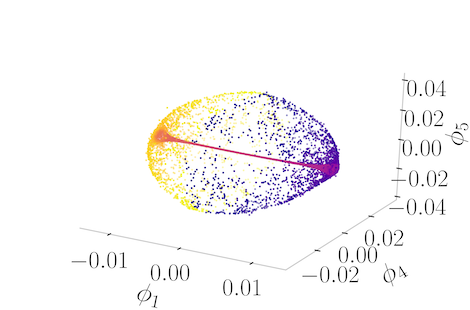}
\label{fig:d7-emb-rank-1} &
\includegraphics[width=0.24\textwidth]{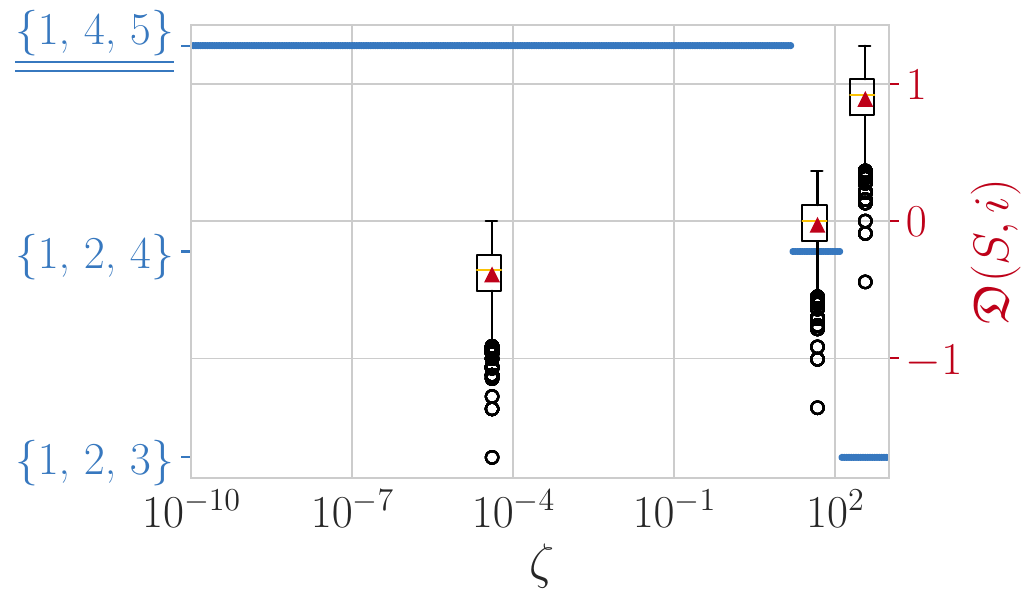}
\label{fig:d7-regu-path} \\

\raisebox{1.4cm}[0pt][0pt]{\rotatebox[origin=c]{90}{$\mathcal D_{13}$}} &
\includegraphics[width=0.24\textwidth]{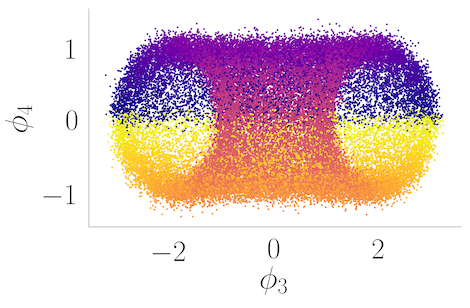}
\label{fig:d14-orig-data-axis-3-4} &
\includegraphics[width=0.24\textwidth]{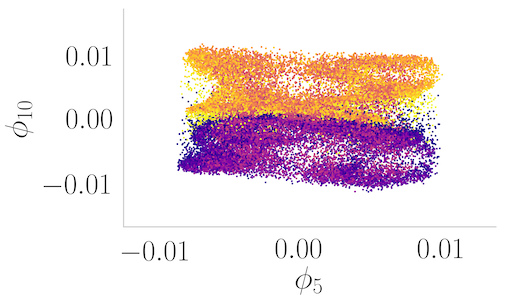}
\label{fig:d14-emb-rank-1-axis-3-10} &
\includegraphics[width=0.24\textwidth]{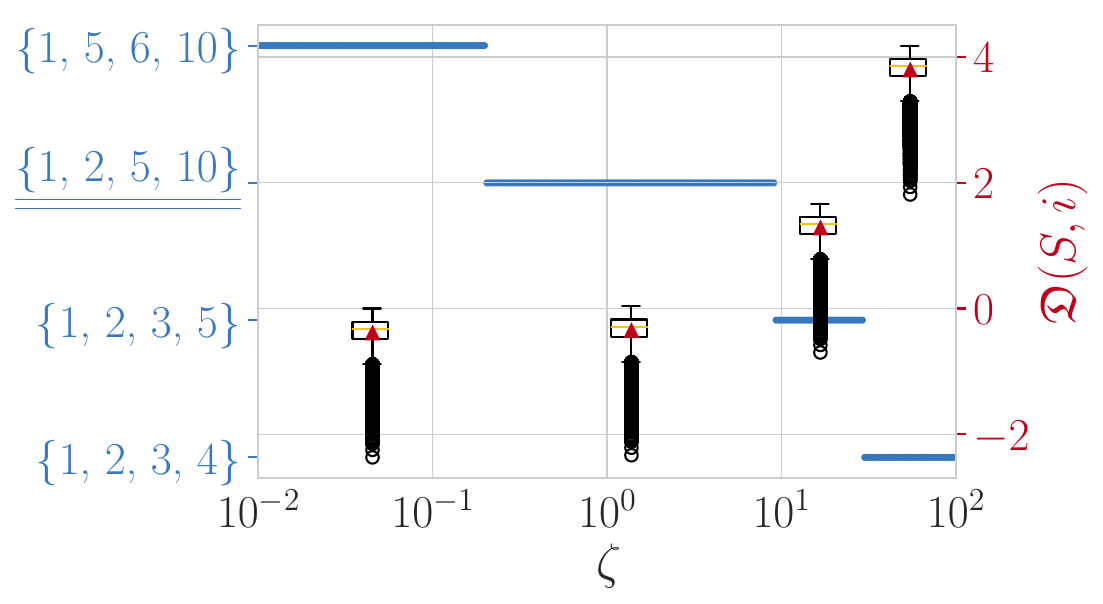}
\label{fig:d14-regu-path}

\end{tabular}

\caption{Experimental result for synthetic datasets. Rows correspond to different synthetic datasets (please refer to Table \ref{tab:dataset-notations}). Optimal subset $S_*$ is selected by \coordsearch. }

\label{fig:synth-data-all}
\end{figure*}

We demonstrate the proposed algorithm on three synthetic datasets, one where the minimum embedding dimension  $s$ equals $d$ ($\mathcal D_1$ {\em long strip}), and two ($\mathcal D_7$ {\em high torus} and $\mathcal D_{13}$ {\em three torus}) where $s>d$.
The complete list of synthetic manifolds (transformations of 2 dimensional strips, 3 dimensional
cubes, two and three tori, etc.) investigated can be found in Supplement \ref{sec:addi-exps} and Table \ref{tab:dataset-notations}.
The examples have (i) aspect ratio of at least 4 (ii) points sampled 
{\em non-uniformly} from the underlying manifold $\mathcal M$, and (iii) Gaussian noise added. The sample size of the synthetic datasets is $n = 10, 000$ unless otherwise stated. 
Additionally, we  analyze several real datasets from chemistry and astronomy. All embeddings are computed with the DM algorithm, which outputs $m = 20$ eigenvectors. Hence, we examine 171 sets for $s=3$ and $969$ sets for $s=4$. No more than 2 to 5 of these sets appear on the regularization path. Detailed experimental results are in Table \ref{tab:result-comp-synth-data}. 
In this section, we show the original dataset $\vect{X}$, the embedding $\phi_{S_*}$, with 
$S_*$ selected by \coordsearch~and $\zeta_*$ from \regusearch, and the maximizer sets on the regularization path with box plots of $\mathfrak D(S, i)$ as discussed in Section \ref{sec:algorithm}. The $\alpha$ threshold for \regusearch~is set to $75\%$. 
All the experiments are replicated for more than 5 times, and the outputs are similar 
because of the large sample size $n$. 

{\bf Synthetic manifolds}
The results of synthetic manifolds are in Figure \ref{fig:synth-data-all}. 
\underline{(i) Manifold with $s = d$.} 
The first synthetic dataset we considered, $\mathcal D_1$, is a two dimensional strip with 
aspect ratio 
$W/H = 2\pi$. Left panel of the top row shows the scatter plot of such dataset. From the
theoretical analysis in Section \ref{sec:problem-formulation}, the coordinate set that 
corresponds to 
slowest varying unique eigendirection is $S = \{1, \lceil W/H\rceil\} = \{1, 7\}$. 
Middle panel,
with $S_* = \{1, 7\}$ selected by \coordsearch~with $\zeta$ chosen by \regusearch, confirms this. 
The right panel shows the box plot of $\{\mathfrak D(S, i)\}_{i=1}^n$. According to the proposed
procedure, we eliminate $S_0 = \{1, 2\}$ since $\mathfrak D(S_0, i)\geq 0$ for almost all the points.
\underline{(ii) Manifold with $s > d$.}
The second data $\mathcal D_7$ is displayed in the left panel of the second row.
Due to the mechanism we used to generate the data,
the resultant torus is non-uniformly distributed along the z axis. 
Middle panel is the embedding of the optimal coordinate set $S_* = \{1, 4, 5\}$
selected by \coordsearch. 
Note that the middle region (in red) is indeed a two dimensional narrow tube when zoomed in. 
The right panel indicates that both $\{1, 2, 3\}$ and $\{1, 2, 4\}$ (median is around zero)
should be removed. The optimal regularization parameter is $\zeta_*\approx 7$. 
The result of the third dataset $\mathcal D_{13}$, {\em three torus}, is in the third row
of the figure. We displayed only projections of the penultimate and the last coordinate of 
original data $\vect{X}$ and embedding $\phi_{S_*}$ (which is $\{5, 10\}$) 
colored by $\alpha_1$ of \eqref{eq:three-torus-parameterization} 
in the left and middle panel to conserve space. 
A full combinations of coordinates can be found in 
Figure \ref{fig:synth-data-3-torus}. 
The right panel implies one should eliminate the set $\{1, 2, 3, 4\}$
and $\{1, 2, 3, 5\}$ since both of them have more than $75\%$ of the points 
such that $\mathfrak D(S, i) \geq 0$. The first remaining subset is $\{1, 2, 5, 10\}$, 
which yields an optimal 
regularization parameter $\zeta_* \approx 5$. 

\begin{figure*}[t]
\subfloat[][Embedding ${\phi}_{[3]}$]
{\includegraphics[width=0.24\textwidth, trim={2.5cm, 0cm, 0cm, 2cm}, clip]{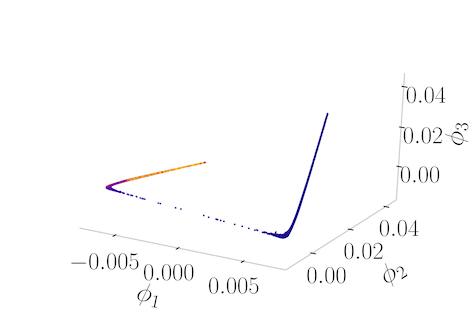}
\label{fig:chloro-first-three}}
\subfloat[][$\loss(\{1, i, j\})$]
{\includegraphics[width=0.24\textwidth]{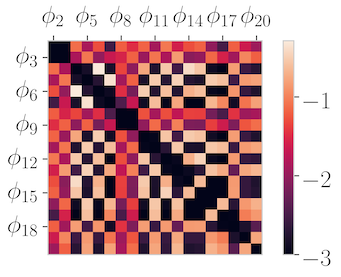}
\label{fig:chloro-loss}}
\subfloat[][$\phi_{S_1} = \phi_{\{1, 4, 6\}}$]
{\includegraphics[width=0.24\textwidth, trim={2.5cm, 0cm, 0cm, 2cm}, clip]{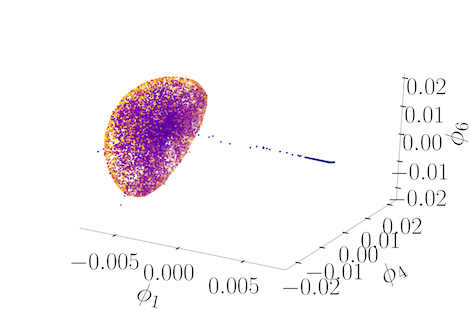}
\label{fig:chloro-1-4-6}}
\subfloat[][$\phi_{S_2} = \phi_{\{1, 5, 7\}}$]
{\includegraphics[width=0.24\textwidth, trim={2.5cm, 0cm, 0cm, 2cm}, clip]{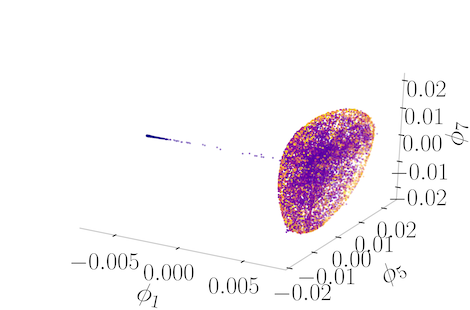}
\label{fig:chloro-1-5-7}}\hfill

\subfloat[][$\loss(\{1, 2\}) = -1.24$]
{\includegraphics[width=0.24\textwidth]{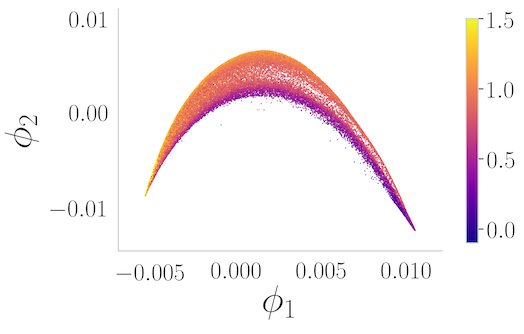}
\label{fig:sdss-coord-1-2}}\hfill
\subfloat[][$\loss(\{1, 3\}) = -0.39$]
{\includegraphics[width=0.24\textwidth]{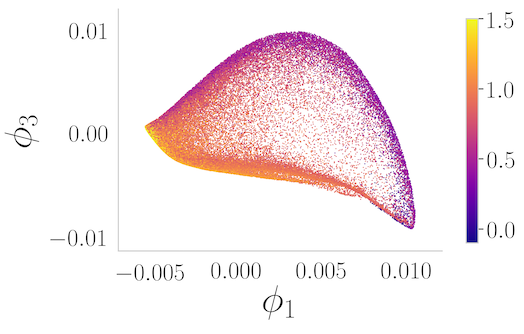}
\label{fig:sdss-coord-1-3}}\hfill
\subfloat[][Subset $\phi_{\{1, 2, 5\}}$ by LLR]
{\includegraphics[width=0.24\textwidth, trim={2.5cm, 0cm, 0cm, 2cm}, clip]{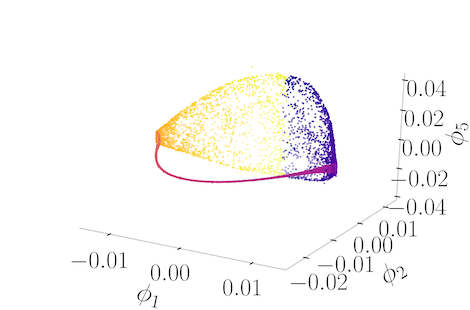}
\label{fig:d7-llr-rank-1}}\hfill
\subfloat[][$r_k$ vs. $\phi_k$]
{\includegraphics[width=0.24\textwidth]{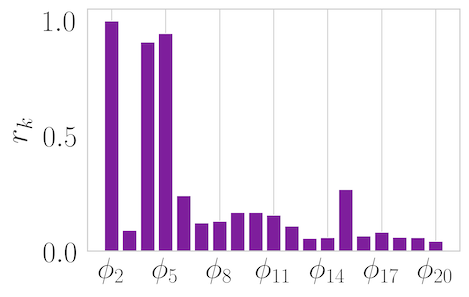}
\label{fig:d7-loo-error}}\hfill

\caption{First row: Choloromethane dataset; second row: SDSS dataset in (e), (f) and (g), (h) show the example when LLR failed.
(c) and (d) are embeddings with top two ranked subsets $S_1$ and $S_2$, colored by the distances between \ce{C} and two different \ce{Cl-}, respectively.
(e) and (f) are embeddings of $\vect{\phi}_{\{1, 2\}}$ (suboptimal set) and $\vect{\phi}_{\{1, 3\}}$ (maximizer of $\loss$), respectively (values shown in caption). 
}
\label{fig:rest-images-table}
\end{figure*}

\paragraph{Molecular dynamics dataset \cite{flemingTPfae:16}}
In SN2 reaction molecular dynamics of chloromethane \cite{flemingTPfae:16} dataset, two chloride atoms substitute with each other in different configurations/points $\xx_i$ as described in the following chemical equation \ce{CH3Cl + Cl^- <-> CH3Cl + Cl^-}. 
The dataset exhibits some kind of clustering structure with a sparse connection between two clusters which represents the time when the substitution happened. 
The dataset has size $n\approx 30,000$ and ambient dimension $D = 40$, with
the intrinsic dimension estimate be $\hat{d} = 2$ 
The embedding with coordinate set $S = [3]$ is shown in Figure \ref{fig:chloro-first-three}. 
The first three eigenvectors parameterize the same directions, which yields a one dimensional manifold in the figure.  Top view ($S = [2]$) of the figure is a u-shaped structure similar to the
yellow curve in Figure \ref{fig:cont-long-strip-27}. 
The heat map of $\loss(\{1, i, j\})$ for different combinations of coordinates in Figure \ref{fig:chloro-loss} confirms that $\loss$ for $S = [3]$ is low and that $\phi_1$, $\phi_2$ and $\phi_3$ give a low rank mapping.
The heat map also shows high $\loss$ values  for $S_1 = \{1, 4, 6\}$ or $S_2 = \{1, 5, 7\}$, which correspond to the top two ranked subsets. 
The embeddings with $S_1, S_2$ are in Figures \ref{fig:chloro-1-4-6} and \ref{fig:chloro-1-5-7}, respectively. In this case, we obtain two optimal $S$ sets due to the data symmetry.

\paragraph{Galaxy spectra from the Sloan Digital Sky survey (SDSS)}\footnote{The Sloan Digital Sky Survey data can be downloaded from \url{https://www.sdss.org}} \cite{abazajian2009seventh}, preprocessed as in \cite{JMLR:v17:16-109}. 
 We display a sample of $n = 50,000$ points from the first $0.3$ million points which correspond to closer galaxies. Figures \ref{fig:sdss-coord-1-2}  and \ref{fig:sdss-coord-1-3} show that the first two coordinates are almost dependent; the embedding with $S_* = \{1, 3\}$ is selected by \coordsearch~with $d = 2$.
Both plots are colored by the blue spectrum magnitude, which is correlated to the number of young stars in the galaxy, showing that this galaxy property varies smoothly  and non-linearly with $\phi_1,\phi_3$, but is not smooth w.r.t. $\phi_1,\phi_2$.

\paragraph{Comparison with \cite{dsilva2018parsimonious}}
The \llrcoordsearch~method outputs similar candidate coordinates as our proposed algorithm most of the time (see Table \ref{tab:result-comp-synth-data}). However, the results differ for {\em high torus} as in Figure \ref{fig:rest-images-table}. Figure \ref{fig:d7-loo-error} is the leave one out (LOO) error $r_k$ versus coordinates. The coordinates chosen by \llrcoordsearch~was $S = \{1, 2, 5\}$, as in Figure \ref{fig:d7-llr-rank-1}. The embedding is clearly shown to be suboptimal, for it failed to capture the cavity within the torus. 
This is because the algorithm searches in a sequential fashion; the noise eigenvector  $\phi_2$ in this example appears before the signal eigenvectors e.g., $\phi_4$ and $\phi_5$.

\paragraph{Additional experiments with real data} are shown in Table \ref{tab:real-dataset-runtime}. Not surprisingly, for most real data sets we examined, the independent coordinates are not the first $s$. They also show that the algorithm scales well and is robust to the noise present in real data.

\begin{table}[!htb]
\caption{Results for other real datasets. Columns from left to right are sample size $n$, ambient dimension of data $D$, average degree of neighbor graph $\mathrm{deg}_\mathrm{avg}$, $(s, d)$ and runtime for IES, and the chosen set $S^*$, respectively. Last three datasets are from \cite{chmiela2017machine}.}
\label{tab:real-dataset-runtime}
\centering
\begin{tabular}{lrrrrrr}
\toprule
{} &      $n$ &   $D$ & $\mathrm{deg}_\mathrm{avg}$ & $(s, d)$ & $t$ (sec) & $S^*$ \\
\midrule
SDSS (full)     &  298,511 &  3750 &                      144.91 &   (2, 2) &    106.05 & (1, 3)\\
Aspirin         &  211,762 &   244 &                      101.03 &   (4, 3) &     85.11 & (1, 2, 3, 7)\\
Ethanol         &  555,092 &   102 &                      107.27 &   (3, 2) &    233.16 & (1, 2, 4)\\
Malondialdehyde &  993,237 &    96 &                      106.51 &   (3, 2) &    459.53 & (1, 2, 3)\\
\bottomrule
\end{tabular}
\end{table}

The asymptotic runtime of \llrcoordsearch~has quadratic dependency on $n$, while for our algorithm is linear in $n$. 
Details of runtime analysis are Supplement \ref{sec:complexity-analysis}. \llrcoordsearch~was too slow to be tested on the four larger datasets (see also Figure \ref{fig:runtime-alg}).

\section{Conclusion}
Algorithms that use eigenvectors, such as DM, are among the most promising and well studied in ML. It is known since \cite{goldberg08} that when the aspect ratio of a low dimensional manifold exceeds a threshold, the choice of eigenvectors becomes non-trivial, and that this threshold can be as low as 2. Our experimental results confirm the need to augment ML algorithms with IES methods in order to successfully apply ML to real world problems. Surprisingly, the IES problem has received little attention in the ML literature, to the extent that the difficulty and complexity of the problem have not been recognized. Our paper advances the state of the art by (i) introducing  for the first time a differential geometric definition of the problem, (ii) highlighting geometric factors such as injectivity radius that, in addition to aspect ratio, influence the number of eigenfunctions needed for a smooth embedding, (iii) constructing selection criteria based on {\em intrinsic manifold quantities}, (iv) which have analyzable asymptotic limits, (v) can be computed efficiently, and (vi) are also robust to the noise present in real scientific data. The library of hard synthetic examples we constructed will be made available along with the python software implementation of our algorithms.

\section*{Acknowledgements}
The authors acknowledge partial support from the U.S. Department of Energy's Office of Energy Efficiency and Renewable Energy (EERE) under the Solar Energy Technologies Office Award Number DE-EE0008563 and from the NSF DMS PD 08-1269 and NSF IIS-0313339  awards. They are grateful to the Tkatchenko and Pfaendtner labs and in particular to Stefan Chmiela and Chris Fu for providing the molecular dynamics data and for many hours of brainstorming and advice. 

\section*{Disclaimer}
The views expressed herein do not necessarily represent the views of the U.S. Department of Energy or the United States Government.

\bibliographystyle{alpha}
\newcommand{\etalchar}[1]{$^{#1}$}

\clearpage
\newpage
\appendix

\renewcommand{\thefigure}{\suppnumberprefix\arabic{figure}}
\setcounter{figure}{0}

\renewcommand{\thetable}{\suppnumberprefix\arabic{table}}
\setcounter{table}{0}

\renewcommand{\theequation}{\suppnumberprefix\arabic{equation}}
\setcounter{equation}{0}

\renewcommand{\thealgocf}{\suppnumberprefix\arabic{algocf}}
\setcounter{algocf}{0}

\renewcommand{\thelemma}{\suppnumberprefix\arabic{lemma}}
\setcounter{lemma}{0}

\twocolumn[\section*{\centering Supplement to \\ \titlename }\vspace{2em}]
\section{Notational table}
\begin{table}[!htb]
\begin{adjustwidth}{-50pt}{0pt}
\caption{Notational table}
\label{tab:notation-summary}
\centering
\begin{tabular}{cl}
\toprule
\multicolumn{2}{c}{Matrix operation} \\
\midrule
$\vect{M}$ & Matrix \\
$\vect{m}_i$ & Vector represents the $i$-th row of $\vect{M}$  \\ 
$\vect{m}_{:,j}^T$ & Vector represents the $j$-th column of $\vect{M}$  \\
$m_{ij}$ & Scalar represents $ij$-th element of $\vect{M}$  \\
$[\vect{M}]_{ij}$ & Scalar, alternative notation for $m_{ij}$  \\
$\vect{M}[\alpha, \beta]$ & Submatrix of $\vect{M}$ of index sets $\alpha, \beta$ \\
$\vect{v}$ & Column vector   \\
$v_{i}$ & Scalar represents $i$-th element of vector $\vect{v}$  \\
$[\vect{v}]_{i}$ & Scalar, alternative notation for $v_{i}$   \\
\midrule
\multicolumn{2}{c}{Scalars} \\
\midrule
$n$ & Number of samples  \\
$D$ & Ambient dimension \\
$m$ & Dimension of diffusion embedding \\
$s$ & (Minimum) embedding dimension \\
$d$ & Intrinsic dimension \\
\midrule
\multicolumn{2}{c}{Vectors \& Matrices} \\
\midrule
$\vect{X}$ & Data matrix \\
$\vect{x}_i$ & Point $i$ in ambient space \\
$\vect{Y}$ & Diffusion coordinates \\
$\vect{y}_i$ & Point $i$ in diffusion coordinates \\
$\phi_i$ & The $i$-th diffusion coordinate of all points \\
$\vect{K}$ & Kernel (similarity) matrix \\
$\vect{L}$ & Graph Laplacian \\
$\vect{H}(i)$ & Dual metric at point $i$ \\
$\vect{I}_k$ & Identity matrix in $k$ dimension space \\
$\vect{1}_n$ & All one vector $\in\mathbb R^n$ \\
$\vect{1}_S$ & $[\vect{1}_S]_i = 1$ if $i\in S$ 0 otherwise\\
\midrule
\multicolumn{2}{c}{Miscellaneous} \\
\midrule
$G(V, E)$ & Graph with vertex set $V$ and edge set $E$ \\
$\M$ & Data manifold \\
$\phi(\cdot)$ & Embedding mapping \\
$\loss(S;\zeta)$ & Utilities \\
$\risk$ & Unpenalized utilities \\
$[s]$ & Set $\{1, \cdots, s\}$ \\
$D(\cdot\|\cdot)$ & KL divergence \\
$\jacobian$ & Jacobian \\
$\mathfrak D(S, i)$ & Leave-one-out regret of point $i$ \\
\bottomrule
\end{tabular}
\label{tb:notation-summaries}
\end{adjustwidth}
\end{table}

\newpage
\section{Pseudocodes}
\label{sec:pseudocodes}
\begin{algorithm}[!htb]
    \SetKwInOut{Input}{Input}
	\SetKwInOut{Output}{Return}
	\SetKwComment{Comment}{~~$\bullet$\ }{}
\Input{Data matrix $\vect{X}\in \rrr^{n\times D}$, bandwidth $\varepsilon$, embedding dimension
$m$}
Compute similarity matrix $\vect{K}$ with
$K_{ij}=\begin{cases}\exp\left[-\frac{||\vect{x}_i-\vect{x}_j||^2}{\varepsilon^2}\right] & \text{ if } \|x-y\|\leq 3\varepsilon \\ 0 & \text{ otherwise }\end{cases}$ \\
$\vect{L}\gets\text{\lapalg}(\vect{K})\in \rrr^{n\times n}$ (Algorithm
\ref{alg:graph-laplacian-alg})\\
Compute eigenvectors of $\vect{L}$ for {smallest $m+1$ eigenvalues}
$[\vect{\phi}_0\,\vect{\phi}_1\,\ldots \vect{\phi}_m]\in\rrr^{n\times (m+1)}$ \\
\Output{$\vect{\Phi}=[\vect{\phi}_1\,\ldots \vect{\phi}_m]\in \rrr^{n\times m}$
The \emph{embedding coordinates} of $\vect{x}_i$ are $(\Phi_{i1},\ldots, \Phi_{im}) \in \rrr^m$
}
    \caption{\diffmapalg}
    \label{alg:diffusion-map}
\end{algorithm}

\begin{algorithm}[!htb]
    \SetKwInOut{Input}{Input}
	\SetKwInOut{Output}{Return}
	\SetKwComment{Comment}{$\triangleright $\ }{}
    \Input{Symmetric similarity matrix $\vect{K}$}
    Calculate the {\em degree} of node $i$, $[\vect{w}]_i = \sum_{j=1}^n K_{ij}$ 
    \Comment{Set $\vect{W} = \diag(\vect{w})$}
    $\tilde{\vect{L}} = \vect{W}^{-1}\vect{K}\vect{W}^{-1}$ \\
    $[\tilde{\vect{w}}]_i\gets \sum_{j=1}^n \tilde{L}_{ij}$ 
    \Comment{Set $\tilde{\vect{W}} = \diag(\tilde{\vect{w}})$}
    $\vect{L} = \vect{I}_n - \tilde{\vect{W}}^{-1}\tilde{\vect{L}}$ \\
    \Output{Renormalized graph Laplacian $\vect{L}$}
    \caption{\lapalg}
    \label{alg:graph-laplacian-alg}
\end{algorithm}

\begin{algorithm}[h]
    \SetKwInOut{Input}{Input}
	\SetKwInOut{Output}{Return}
	\SetKwComment{Comment}{$\triangleright $\ }{}
    \Input{Embedding $\vect{Y} = [\phi_1,\cdots,\phi_m] \in \mathbb R^{n\times m}$}
    Set the leave-one-out validation error $\vect{r} = [1, \cdots, 1] \in \mathbb R^m$ \\
    \For{$s = 2\to m$}{
    	Bandwidth of LLR: $h \gets \inv{3}\cdot \textsc{Median}(\textsc{PairwiseDist}(\phi_{[s-1]}))$ \\
    	$\hat{\phi}_s \gets $ {\sc LocalLinearRegression}$(\phi_s, \phi_{[s-1]}, h)$ \\
    	$r_s = \sqrt{\frac{\|\hat{\phi}_s - \phi_s\|^2}{\|\phi_s\|^2}}$
    }
    $S_*\gets \textsc{ArgSort}(\vect{r})$\\
    \Comment{Sort in descending order.}
    \Output{Sorted independent coordinates $S_*$}
    \caption{\llrcoordsearch}
    \label{alg:llr-coord-search}
\end{algorithm}

\newpage
\onecolumn

\section{Extra theorems}
\label{sec:proofs-extra-analysis}
\subsection{Submodularity of the objective functions}
\label{sec:submodularity-loss-func}
\begin{theorem}
\label{thm:sf-loss1-func}
	For a rank $d$ tangent space matrix $\vect{U}\in \mathbb R^{m\times d}$, 
	if any submatrix $\vect{U}_S$, with index set $S\subseteq [m]$ and $|S| = s \geq d$,
	is rank $d$, we have $\risk_1$ be a submodular set function. 
\end{theorem}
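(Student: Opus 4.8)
The plan is to reduce the statement to a per-point claim and then invoke a standard log-determinant submodularity argument. Since $\risk_1(S)=\tfrac1n\sum_{i=1}^n\risk_1(S;i)$ is a nonnegative combination of the per-point terms $\risk_1(S;i)=\tfrac12\log\det(\vect{U}_S(i)^\top\vect{U}_S(i))$, and submodularity is preserved under nonnegative linear combinations, it suffices to show that each per-point term is submodular. Fixing $i$ and writing $\vect{v}_k$ for the $k$-th row of $\vect{U}(i)$ viewed as a column in $\rrr^d$, I would first record the identity $\vect{U}_S(i)^\top\vect{U}_S(i)=\sum_{k\in S}\vect{v}_k\vect{v}_k^\top=:\vect{A}(S)$, so that the per-point term becomes $\tfrac12\log\det\vect{A}(S)$, the log-volume of a Gram matrix that grows as $S$ grows.

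Next I would compute the marginal gain of adding an index $e\notin S$. When $\vect{A}(S)$ is invertible, the matrix determinant lemma gives $\det(\vect{A}(S)+\vect{v}_e\vect{v}_e^\top)=\det\vect{A}(S)\,(1+\vect{v}_e^\top\vect{A}(S)^{-1}\vect{v}_e)$, and hence
\beq
\risk_1(S\cup\{e\};i)-\risk_1(S;i)=\tfrac12\log\!\left(1+\vect{v}_e^\top\vect{A}(S)^{-1}\vect{v}_e\right).
\eeq
To obtain the diminishing-returns inequality for $A\subseteq B$ with $e\notin B$, I would use $\vect{A}(B)=\vect{A}(A)+\sum_{k\in B\setminus A}\vect{v}_k\vect{v}_k^\top\succeq\vect{A}(A)\succ0$ together with the operator antitonicity of matrix inversion on the positive-definite cone, giving $\vect{A}(A)^{-1}\succeq\vect{A}(B)^{-1}$ and therefore $\vect{v}_e^\top\vect{A}(A)^{-1}\vect{v}_e\ge\vect{v}_e^\top\vect{A}(B)^{-1}\vect{v}_e$. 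Since $t\mapsto\tfrac12\log(1+t)$ is increasing, the marginal gain at $A$ dominates the marginal gain at $B$, which is exactly the submodularity condition.

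The step I expect to be the main obstacle is the degenerate regime: when $|S|<d$ the Gram matrix $\vect{A}(S)$ is singular, $\log\det\vect{A}(S)=-\infty$, and both the determinant lemma and the inverse-monotonicity argument fail. This is precisely where the hypothesis of the theorem enters: every index set of size at least $d$ produces a full-rank $\vect{U}_S(i)$, hence a positive-definite $\vect{A}(S)$, so the argument above applies verbatim whenever $|A|\ge d$. To keep the statement clean I would restrict the set function to the sublattice of full-rank sets relevant to the cardinality-constrained search of \eqref{eq:subset-selection-problem} (those with $|S|=s\ge d$ and $1\in S$), on which $\vect{A}(S)$ is always invertible; identifying this restriction as the correct domain, rather than all of $2^{[m]}$, is the only subtle point, while the linear-algebraic core is routine.
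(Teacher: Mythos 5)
Your proof is correct and takes essentially the same route as the paper's: both reduce to a single point (your per-point decomposition is the paper's ``WLOG $n=1$''), compute the marginal gain of adding an index via the matrix determinant lemma, $\det(\vect{A}(S)+\vect{v}_e\vect{v}_e^\top)=\det\vect{A}(S)\,(1+\vect{v}_e^\top\vect{A}(S)^{-1}\vect{v}_e)$, and then compare gains through the operator antitonicity of inversion on the positive-definite cone, $\vect{A}(A)^{-1}\succeq\vect{A}(B)^{-1}$ for $\vect{A}(B)\succeq\vect{A}(A)\succ 0$. Your explicit treatment of the degenerate regime $|S|<d$ (where $\log\det$ is $-\infty$ and the argument must be restricted to full-rank sets) is a point the paper's proof glosses over, but it is a refinement of the same argument rather than a different approach.
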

\begin{proof}
	W.L.O.G, set $n = 1$, with slightly abuse of 
	notation, let $\vect{U} = \vect{U}_{T\cup\{i\}} \in \mathbb R^{(|T|+1)\times d}$. The matrix can be 
	written in the following form
	\begin{equation}
		\vect{U} = 
		\begin{bmatrix}
			\vect{T} \\
			\vect{a}
		\end{bmatrix} = 
		\begin{bmatrix}
			\vect{S} \\
			\vect{V} \\
			\vect{a}
		\end{bmatrix} \in \mathbb R^{(|T|+1)\times d}
		\nonumber
	\end{equation}
	
	With $\vect{U}_S = \vect{S}$, $\vect{U}_T = \vect{T}$ and $U_{\{i\}} = \vect{a}$ for set $S \subseteq T \subseteq [m]$ 
	and $i\in [m]\backslash T$. 
	Here $\vect{a}\in \mathbb R^{1\times d}$. By the definition of $\risk_1$ in \eqref{eq:loss-func}, one has (ignoring the constants)
	\begin{equation}
	\begin{gathered}
		\risk_1(S) = \log\det(\vect{S}^\top\vect{S}) \\	
		\risk_1(T) = \log\det\left(\vect{T}^\top\vect{T}\right) \\
		\risk_1(S\cap \{i\}) = \log\det\left(\begin{bmatrix}\vect{S}\\ \vect{a}\end{bmatrix}^\top\begin{bmatrix}\vect{S}\\ \vect{a}\end{bmatrix}\right) \\
		\risk_1(T\cap \{i\}) = \log\det(\vect{U}^\top\vect{U})
	\end{gathered}
	\nonumber
	\end{equation} 
	Denote $\partial_i f(S) = f(S\cup\{i\}) - f(S)$ for some function $f$, we have 
	\begin{equation}
	\begin{gathered}
		\partial_i \risk_1(S) = \log\det(\vect{S}^\top\vect{S} + \vect{a}^\top\vect{a}) -\log\det (\vect{S}^\top \vect{S}) \\  
		\partial_i \risk_1(T) = \log\det(\vect{T}^\top\vect{T} + \vect{a}^\top\vect{a}) -\log\det (\vect{T}^\top \vect{T})
	\end{gathered}	
	\nonumber
	\end{equation}
	The full rank of any submatrices guarantees the positive definiteness of 
	$\vect{S}^\top\vect{S}, \vect{T}^\top\vect{T}$, by matrix determinant lemma
	\cite{harville1998matrix}, we have
	\begin{equation}
		\det(\vect{S}^\top\vect{S} + \vect{a}^\top\vect{a}) = \det(\vect{S}^\top\vect{S})\left(1+\vect{a}(\vect{S}^\top\vect{S})^{-1}\vect{a}^\top\right)
		\nonumber
	\end{equation}
	Therefore
	\begin{equation}
		\partial_i \risk_1(S) = 1+\vect{a}(\vect{S}^\top\vect{S})^{-1}\vect{a}^\top
		\nonumber
	\end{equation}
	Similar equation holds for set $T$. Therefore,
	\begin{equation}
		\partial_i \risk_1(S) - \partial_i \risk_1(T) = \log \frac{1+\vect{a}(\vect{S}^\top\vect{S})^{-1}\vect{a}^\top}{1+\vect{a}(\vect{T}^\top\vect{T})^{-1}\vect{a}^\top}
		\nonumber
	\end{equation}
	
	Because $\vect{T}^\top\vect{T}\succeq\vect{S}^\top\vect{S}$, we have 
	$(\vect{S}^\top\vect{S})^{-1}\succeq (\vect{T}^\top\vect{T})^{-1}$ \cite{horn1990matrix},
	which implies $\partial_i \risk_1(S) - \partial_i \risk_1(T) \geq 0$ for all 
	$S\subseteq T\subseteq [m]$ and $i\in [m]\backslash T$. This completes the proof.
\end{proof}
\begin{theorem}
\label{thm:sf-loss2-func}
	$\risk_2$ is a submodular set function.
\end{theorem}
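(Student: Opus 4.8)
The plan is to reduce the claim to a single, classical building block. Recalling from \eqref{eq:loss-func} that
\begin{equation}
\risk_2(S)=\inv{n}\sum_{i=1}^n\sum_{k=1}^d \log\|\vect{u}^S_k(i)\|_2=\inv{2n}\sum_{i=1}^n\sum_{k=1}^d \log\Big(\sum_{j\in S} u_{jk}(i)^2\Big),
\end{equation}
where $u_{jk}(i)$ denotes the $(j,k)$ entry of $\vect{U}(i)$, I observe that $\risk_2$ is a nonnegatively weighted sum over the data index $i$ and the coordinate index $k$. Since the class of submodular set functions is closed under addition and under multiplication by a nonnegative scalar, it suffices to prove that for each fixed $i$ and $k$ the term $g_{ik}(S)=\log\big(\sum_{j\in S} u_{jk}(i)^2\big)$ is submodular on the ground set $[m]$.

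First I would set $a_j=u_{jk}(i)^2\ge 0$ and write $m(S)=\sum_{j\in S}a_j$, a nonnegative \emph{modular} set function, so that $g_{ik}=\log\circ\, m$. The key observation is that $\log$ is concave and nondecreasing; composing a concave nondecreasing scalar function with a nonnegative modular function yields a submodular function, which already closes the argument. To make this self-contained and parallel to the proof of Theorem \ref{thm:sf-loss1-func}, I would instead verify the diminishing-returns inequality directly: for $S\subseteq T\subseteq[m]$ and $\ell\in[m]\setminus T$, writing $\partial_\ell f(S)=f(S\cup\{\ell\})-f(S)$,
\begin{equation}
\partial_\ell g_{ik}(S)=\log\!\Big(1+\frac{a_\ell}{\sum_{j\in S}a_j}\Big),\qquad \partial_\ell g_{ik}(T)=\log\!\Big(1+\frac{a_\ell}{\sum_{j\in T}a_j}\Big).
\end{equation}
Because $S\subseteq T$ and all $a_j\ge 0$, we have $\sum_{j\in S}a_j\le\sum_{j\in T}a_j$, hence $a_\ell/\sum_{j\in S}a_j\ge a_\ell/\sum_{j\in T}a_j$, and monotonicity of $\log$ gives $\partial_\ell g_{ik}(S)\ge\partial_\ell g_{ik}(T)$, which is precisely submodularity of $g_{ik}$.

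The only delicate point, which I expect to be the main obstacle, is the well-definedness of the logarithm: the computation above tacitly assumes $\sum_{j\in S}a_j>0$, i.e. that the $k$-th column of $\vect{U}_S(i)$ does not vanish. This is secured under the same full-rank hypothesis used in Theorem \ref{thm:sf-loss1-func}: since every submatrix $\vect{U}_S$ with $|S|\ge d$ has rank $d$, none of its columns can be zero, so each partial sum $m(S)$ is strictly positive on the admissible subsets. With positivity established, summing the $g_{ik}$ over $i\in[n]$ and $k\in[d]$ against the nonnegative weight $1/(2n)$ preserves submodularity, and the claim follows.
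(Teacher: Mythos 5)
Your proof is correct and follows essentially the same route as the paper's: reduce (via closure of submodularity under nonnegative weighted sums, which is the paper's ``W.L.O.G.\ $n,d=1$'' step made explicit) to a single term $\log\bigl(\sum_{j\in S}u_{jk}(i)^2\bigr)$, then verify the diminishing-returns inequality, which is exactly the paper's computation showing $\partial_i\risk_2(S)-\partial_i\risk_2(T)\geq 0$, except that you conclude by monotonicity of $x\mapsto\log(1+a_\ell/x)$ where the paper expands the products algebraically. Your two refinements---the concave-composed-with-modular framing and the explicit positivity condition ensuring the logarithms are finite (a point the paper silently glosses over)---are welcome but do not change the substance of the argument.
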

\begin{proof}
	W.L.O.G, set $n, d = 1$. With slightly abuse of notation, let $\vect{u} \gets \vect{u}_1(i)$ and $\vect{u}_S \gets \vect{u}^S_1(i)$. For any set $S \subseteq T \subseteq [m]$ and $i\in [m]\backslash T$, we have 
	\begin{equation}
	\begin{gathered}
	\begin{split}
		\partial_i \risk_2(S) & = \risk_2\left(S \cap \{i\}\right) - \risk_2(S) 
		= \log \frac{\sum_{k\in S} u_k^2 + u_i^2}{\sum_{k\in S}u_k^2}
		= \log \frac{\Sigma_S + u_i^2}{\Sigma_S}
	\end{split} \\
	\begin{split}
		\partial_i \risk_2(T) & = \risk_2\left(T \cap \{i\}\right) - \risk_2(T) = \log \frac{\sum_{k\in T} u_k^2 + u_i^2}{\sum_{k\in T}u_k^2}  = \log \frac{\Sigma_S + \Sigma_{T\backslash S} + u_i^2}{\Sigma_S + \Sigma_{T\backslash S}}
	\end{split}
	\end{gathered}
	\nonumber
	\end{equation}
	
	Where $\Sigma_S = \sum_{k\in S} u_k^2$. By definition, 
	we have $\Sigma_{S}, \Sigma_{T\backslash S}, u_i^2 \geq 0$. Therefore, 
	\begin{equation}
	\begin{split}
		& \partial_i \risk_2(S) - \partial_i \risk_2(T) = \log \frac{\left(\Sigma_S + u_i^2\right)\cdot\left(\Sigma_S + \Sigma_{T\backslash S}\right)}{\Sigma_S\cdot \left(\Sigma_S + \Sigma_{T\backslash S} + u_i^2\right)} \\
		& = \log \underbrace{\left[\frac{\Sigma_S^2 + \Sigma_S\left(\Sigma_{T\backslash S} + u^2_i\right) + u_i^2\Sigma_{T\backslash S}}{\Sigma_S^2 + \Sigma_S\left(\Sigma_{T\backslash S} + u^2_i\right)}\right]}_{\geq 1} \geq 0
	\end{split}
	\nonumber
	\end{equation}
	Which completes the proof. 
\end{proof}

\section{Greedy search}
\label{sec:greedy-algorithm}
\begin{algorithm}[H]
    \setstretch{1.15}
    \SetKwInOut{Input}{Input}
    \SetKwInOut{Output}{Return}
    \SetKwComment{Comment}{$\triangleright $\ }{}
    \Input{Orthogonal basis $\{\vect{U}(i)\}_{i=1}^n$, eigenvalues $\vect{\lambda}$, intrinsic dimension $d$, regularization parameter $\zeta$}
    Solve $S_* \gets \argmax_{S\subseteq[m]; |S|=d; 1\in S} \loss(S;\zeta)$. \\
    \For{$s = d+1\to m$}{
        $k_* = \argmax_{k\in [m] \backslash S_*} \loss(S_*\cup \{k\};\zeta)$ \\
        $S_* \gets S_*\cup \{k_*\}$ 
        \Comment{Record order}
    }
    \Output{Independent coordinates $S_*$}
    \caption{\greedycoordsearch}
    \label{alg:greedy-indep-coord-search}
\end{algorithm}

Inspired by the greedy version of submodular maximization \cite{nemhauser1978analysis}, a greedy heuristic has been proposed, as in Algorithm
\ref{alg:greedy-indep-coord-search}.
The algorithm starts from an observation that the optimal value of the $S' = \argmax_{S; d\leq |S| < s} \loss(S;\zeta)$ will often time be a subset of the optimal $S_*$ of
\eqref{eq:subset-selection-problem}.
Since the appropriate cardinality of the set $S$ is unknown, we can simply scan from $|S| = d$ to $m$. 
The order of the returned elements indicates the significance of the corresponding coordinate. 

\section{Computational complexity analysis}
\label{sec:complexity-analysis}

\subsection{The proposed algorithms}
For computation complexity analysis, we assume the embedding has already been obtained. 
Therefore, the computational complexity for building neighbor graph and solving the
eigen-problem of graph Laplacian can be omitted. This is also the case for \llrcoordsearch.

\paragraph{Co-metrics and orthogonal basis}
According to \cite{2013arXiv1305.7255P}, time complexity for computing $\vect{H}(i)\in\mathbb 
R^{m\times m}~\forall~i\in[n]$  is $\mathcal O(nm^2\delta)$, with $\delta$ be the average degree
of the neighbor graph $G(V, E)$. In manifold learning, the graph will be sparse 
therefore $\delta \ll n$. Time complexity for obtaining principal space $\vect{U}(i)$
of point $i$ via SVD will be $\mathcal O(m^3)$. Total time complexity will be 
$\mathcal O(nm^2\delta + nm^3)$.

\paragraph{Exact search} 
Evaluating the objective function $\loss$ for each point $i$ takes $\mathcal O(sd^2)$ 
in computing $\vect{U}_S(i)^\top \vect{U}_S(i)$, $\mathcal O(d^3)$ in 
evaluating the determinant of a $d\times d$ matrix. Normalization ($\risk_2$ term)
takes $\mathcal O(ds)$. Exhaustive search over all the subset with cardinality $s$
takes $\mathcal O\left(\binom{m}{s}\right)$. The total computational complexity will 
therefore be 
$\mathcal O(nm^s(d^3+d^2s) + nm^2\delta+nm^3) = \mathcal O(nm^{s+3} + nm^2\delta)$.

\paragraph{Greedy algorithm}
First step of greedy algorithm includes solving $\argmax_{S\subseteq [m]; |S|=d} \loss(S, d)$, which takes $\mathcal O(nm^dd^3) = \mathcal O(nm^{d+3})$. Starting from $s = d+1\to m$,
each step includes exhaustively search over $m-s$ candidates, with the time complexity
of evaluating $\loss$ be $n(d^3+d^2s)$. Putting things together, one has the second part of 
the greedy algorithm be
\begin{equation}
	\sum_{s=d}^m n(m-s)(d^3+d^2s) = \mathcal O(nm^5)
\end{equation}

The total computational complexity will therefore be $\mathcal O(n(m^{d+3} + m^5 + m^2\delta))$.

\subsection{Time complexity of \cite{dsilva2018parsimonious} \& discussion}
\label{sec:comp-llr}
The Algorithm \llrcoordsearch~is summarized in Algorithm \ref{alg:llr-coord-search}.
For searching over fixed coordinate $s$, 
the algorithm first build a kernel for local linear regression 
by constructing a neighbor graph, 
which takes $\mathcal O(n\log(n)s)$\footnote{
This is a simplified lower bound, see \cite{dasgupta2013randomized} for details.}
using approximate nearest neighbor search. 
The $s$ dependency come from the dimension of the feature.
For each point $i$, a ordinary least square (OLS) problem is solved, which results
in $\mathcal O(n^2s^2+ns^3)$ time complexity. 
\begin{wrapfigure}[15]{r}{0.5\textwidth}
    \centering
    \includegraphics[width=0.8\linewidth]{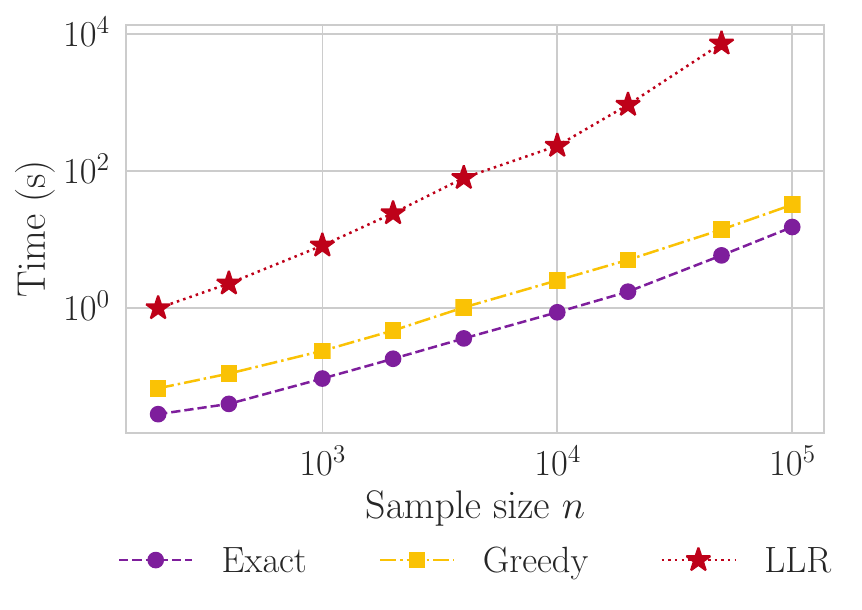}
    \caption{Runtimes of different IES algorithms on two dimensional long strip. Purple, yellow and red curves correspond to \coordsearch, \greedycoordsearch~and \llrcoordsearch~algorithm, respectively.}
    \label{fig:runtime-alg}
\end{wrapfigure}
Searching from $s = 2\to m$ will 
make the total time complexity be
\begin{equation}
	\sum_{s=2}^m n^2s^2 + ns^3 + ns\log n = \mathcal O(n^2m^3 + nm^4)
\end{equation}

For a sparse graph, the overheads of the \coordsearch~and \greedycoordsearch~algorithms
come from the enumeration of the subset $S$. Because of the linear dependency on the
sample size $n$, the algorithm is tractable for small $s$ and $d$. However, \llrcoordsearch~
has a quadratic dependency on sample size $n$, which is more computationally 
intensive for large sample size.
For large $s$ and $d$, one can use the techniques in difference between submodular function 
optimization (e.g. \cite{Iyer:2012:AAM:3020652.3020697}) as $\risk_1$, $\risk_2$ are both 
submodular set function from Theorems \ref{thm:sf-loss1-func} and \ref{thm:sf-loss2-func}. 
An empirical runtime plot for different algorithms 
can be found in Figure \ref{fig:runtime-alg}. The runtime was 
evaluated on two dimensional long strip with $s = d = 2$ and 
was performed on a single desktop computer running Linux with 
32GB RAM and a 8-Core 4.20GHz Intel\textregistered~Core\texttrademark~i7-7700K CPU.

\section{A discussion on UMAP}
\begin{figure*}[t]
\centering
\subfloat[][]
{\includegraphics[width=0.54\linewidth]{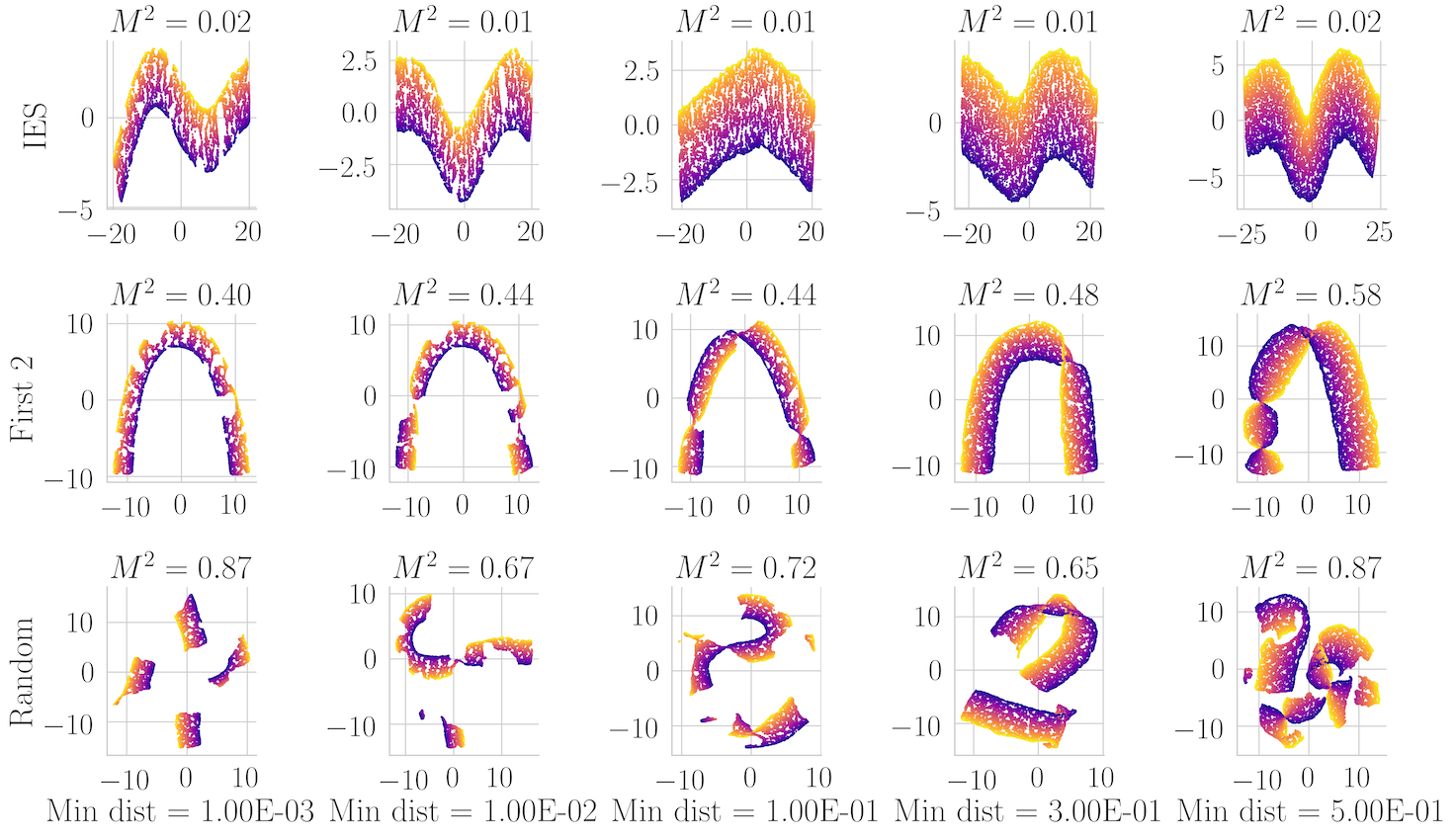}
\label{fig:umap-mindist}}\hfill
\subfloat[][]
{\includegraphics[width=0.43\linewidth]{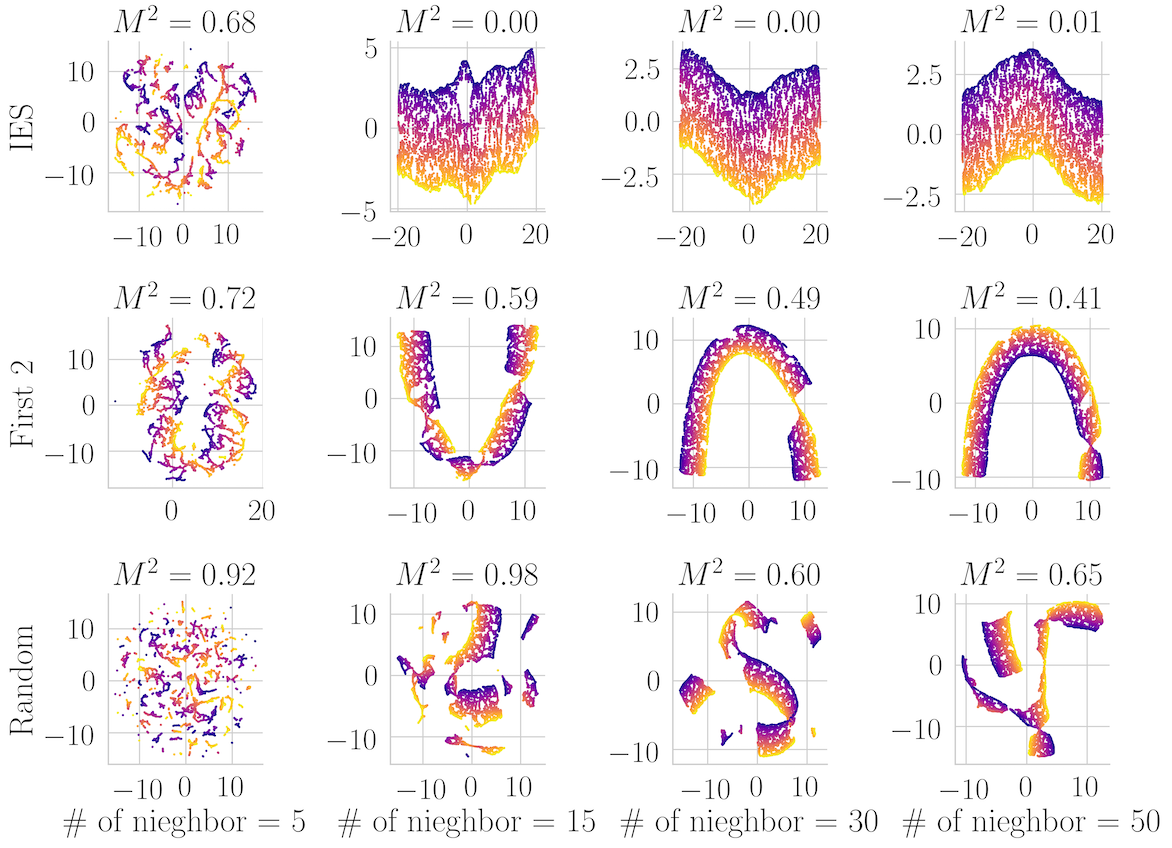}
\label{fig:umap-kneigh}}\hfill

\caption{UMAP embeddings of 2D long stripe with different initializations and choices of hyper-parameters. Rows from top to bottom correspond to UMAP embedding initialized with DM which coordinates chosen by \coordsearch, na\"ive DM and random initialization, respectively. Columns represent different choices of (a) points separation and (b) number of neighbors.}
\label{fig:umap-discussion}	
\end{figure*}

UMAP \cite{mcinnes2018umap} is a commonly used data visualization alternative of t-SNE. 
The authors proposed to use the spectral embedding of the graph Laplacian as an initialization to
the algorithm for faster convergence (compared to random initialization). 
In this section, we showed empirically that, (1) given reasonable computing resources, the 
IES problem also appears in the UMAP embedding and (2) by initializing with spectral 
embedding with carefully selected coordinate set chosen by 
\coordsearch, one can obtain a faster convergence and a globally interpretable embedding. 
Figure \ref{fig:umap-discussion} is the UMAP embedding of 2D long stripe dataset $\mathcal D_1$
with different choices of hyper-parameters (points separation in \ref{fig:umap-mindist} and 
number of neighbors in \ref{fig:umap-kneigh}), 
with total number of epochs be $500$. The first row of both plots are the embedding initialized
with \coordsearch{}, the second row corresponds to those initialized with na\"ive DM. 
The embeddings in the third row are initialized randomly. As shown in the results, 
algorithmic/random artifacts can be easily seen in the embeddings with Na\"ive DM or
random initialization (2nd and 3rd rows).
More precisely, unwanted patterns which reduce the interpretability of the embeddings, 
e.g., the ``knots'' in the second row or disconnected components in the second/third rows, 
are generated. 
The sum of square procrustes error $M^2$
between ground truth dataset and the embedding shown on each subplots also confirm our statement. 
Note that it is possible to unroll the algorithmic artifact with more epochs in the
sampling steps of UMAP. (3 to 5 times more iterations are needed in this example.) 
However, due to the efficiency of performing \coordsearch, it is beneficial 
to initialize with the embedding selected by \coordsearch{}.

\section{Additional experiments \& details of the used datasets}
\label{sec:addi-exps}

In this paper, a total of 13 different synthetic manifolds are considered. Table 
\ref{tab:dataset-notations} summarized the synthetic manifolds constructed and 
its abbreviations (from $\mathcal D_1$ to $\mathcal D_{13}$). 
Embedding results for the synthetic manifolds are in Figures
\ref{fig:synth-data-s-d-equal-addi}, \ref{fig:synth-data-s-geq-d-addi} and 
\ref{fig:synth-data-3-torus}. The ranking of the first few candidate sets $S$ 
from \coordsearch, \greedycoordsearch and \llrcoordsearch~can be found in 
Table \ref{tab:result-comp-synth-data}. The table shows the optimal subsets
return by three different algorithms are often time the same, with exception
for $\mathcal D_7$ {\em high torus} as discussed in Section \ref{sec:experiments}.  

\begin{table}[h]
\caption{Abbreviations for different synthetic manifolds in this paper. The abbreviation with asterisk represents such dataset is discussed in main manuscript.}
\label{tab:dataset-notations}
\centering
\begin{tabular}{cl}
\toprule
\multicolumn{2}{c}{Manifold with $s = d$} \\
\midrule
$\mathcal D_1^*$ & Two dimensional strip (aspect ratio $2\pi$) \\
$\mathcal D_2$ & 2D strip with cavity (aspect ratio $2\pi$) \\
$\mathcal D_3$ & Swiss roll \\
$\mathcal D_4$ & Swiss roll with cavity \\
$\mathcal D_5$ & Gaussian manifold \\
$\mathcal D_6$ & Three dimensional cube \\
\midrule
\multicolumn{2}{c}{Manifold with $s > d$} \\
\midrule
$\mathcal D_7^*$ & High torus \\
$\mathcal D_8$ & Wide torus \\
$\mathcal D_9$ & z-asymmetrized high torus\\
$\mathcal D_{10}$ & x-asymmetrized high torus \\
$\mathcal D_{11}$ & z-asymmetrized wide torus \\
$\mathcal D_{12}$ & x-asymmetrized wide torus \\
$\mathcal D_{13}^*$ & Three-torus \\
\bottomrule
\end{tabular}
\end{table}

\begin{table}
\begin{adjustwidth}{-50pt}{-50pt}
\caption{Results returned from different algorithms on different synthetic datasets.}
\label{tab:result-comp-synth-data}
\centering
\begin{tabular}{llllllll}
\toprule
{} & \multicolumn{5}{c}{Exact search} &             Greedy rank &                   LLR rank \\
{} &              1 &              2 &              3 &              4 & 5 & \multicolumn{2}{c}{} \\
\midrule
$\mathcal D_{1}$  &         [1, 7] &         [1, 8] &         [1, 9] &        [1, 10] &         [1, 12] &   [1, 7, 6, 4, 3, 2, 5] &  [1, 7, 14, 16, 11, 18, 6] \\
$\mathcal D_{2}$  &         [1, 4] &         [1, 8] &         [1, 9] &        [1, 10] &         [1, 12] &   [1, 4, 8, 6, 5, 3, 2] &   [1, 4, 8, 5, 17, 11, 14] \\
$\mathcal D_{3}$  &         [1, 9] &        [1, 10] &        [1, 11] &        [1, 13] &         [1, 18] &   [1, 9, 5, 2, 3, 4, 6] &  [1, 9, 19, 16, 12, 10, 4] \\
$\mathcal D_{4}$  &         [1, 8] &        [1, 10] &        [1, 11] &        [1, 14] &         [1, 15] &  [1, 8, 3, 2, 4, 10, 5] &  [1, 8, 11, 10, 19, 16, 4] \\
$\mathcal D_{5}$  &         [1, 6] &         [1, 8] &        [1, 10] &        [1, 11] &         [1, 13] &  [1, 6, 2, 8, 3, 10, 4] &  [1, 6, 19, 8, 18, 14, 12] \\
$\mathcal D_{6}$  &      [1, 2, 8] &     [1, 2, 11] &      [1, 4, 8] &     [1, 2, 17] &      [1, 2, 13] &   [1, 2, 8, 3, 4, 6, 5] &    [1, 2, 8, 10, 3, 13, 6] \\
$\mathcal D_{7}$  &      [1, 4, 5] &      [1, 4, 8] &      [1, 5, 7] &     [1, 7, 12] &       [1, 7, 8] &   [1, 5, 4, 3, 6, 2, 8] &    [1, 2, 5, 4, 15, 6, 10] \\
$\mathcal D_{8}$  &      [1, 2, 7] &      [1, 4, 7] &      [1, 3, 7] &      [1, 2, 9] &       [1, 5, 7] &  [1, 7, 2, 4, 3, 13, 5] &  [1, 2, 7, 13, 12, 15, 14] \\
$\mathcal D_{9}$  &      [1, 3, 4] &      [1, 3, 7] &      [1, 4, 6] &     [1, 3, 10] &       [1, 7, 9] &   [1, 3, 4, 2, 9, 7, 6] &     [1, 3, 4, 2, 19, 8, 7] \\
$\mathcal D_{10}$ &      [1, 2, 4] &      [1, 3, 4] &      [1, 4, 5] &      [1, 6, 9] &      [1, 6, 14] &   [1, 4, 2, 3, 5, 6, 8] &      [1, 4, 2, 3, 8, 5, 6] \\
$\mathcal D_{11}$ &      [1, 2, 5] &      [1, 4, 8] &      [1, 4, 5] &      [1, 8, 9] &       [1, 2, 8] &   [1, 5, 2, 4, 8, 3, 9] &    [1, 2, 5, 8, 10, 9, 11] \\
$\mathcal D_{12}$ &      [1, 2, 5] &      [1, 4, 5] &      [1, 2, 7] &      [1, 3, 5] &       [1, 2, 8] &   [1, 5, 2, 3, 4, 6, 8] &     [1, 5, 2, 6, 10, 9, 4] \\
$\mathcal D_{13}$ &  [1, 2, 5, 10] &  [1, 3, 5, 10] &  [1, 4, 5, 10] &  [1, 5, 6, 10] &   [1, 2, 8, 10] &  [1, 5, 10, 2, 4, 3, 6] &  [1, 2, 10, 5, 14, 15, 16] \\
\bottomrule
\end{tabular}
\end{adjustwidth}
\end{table}

\subsection{Additional experiments on synthetic manifolds with $s = d$}
Below summarized the details of generating the datasets.
\begin{enumerate}
	\item $\mathcal D_1^*$: points from this dataset are sampled uniformly from
		$\vect{x}_i \sim \textsc{Unif}([-2, 2]\times [-4\pi, 4\pi])$.
	\item $\mathcal D_2$: points are first sampled uniformly from $[-2, 2]\times [-4\pi, 4\pi]$. 
		Points $i$ are removed if $|X_{i1}|<4\pi/3$ and $|X_{i2}|<2/3$. 
	\item $\mathcal D_3$: first sampling points 
		$\vect{X}_\mathrm{true} = [\vect{x}_0, \vect{y}_0]$ 
		uniformly from a two dimensional strip. The data $\vect{X}$ can be obtained by the 
		following non-linear transformation.
		\begin{equation}
			\vect{X} = \left[\frac{\vect{x}_0\scirc \cos \vect{x}_0}{2}, \vect{y}_0, \frac{\vect{x}_0\scirc\sin \vect{x}_0}{2} \right]
		\label{eq:swissroll-trans}
		\end{equation}
		
		With $\scirc$ denotes Hadamard (element-wise) product. 
	\item $\mathcal D_4$: sampling points $\vect{X}_\mathrm{true} = [\vect{x}_0, \vect{y}_0]$ 
		uniformly from 2D strip with cavity then applying the transformation 
		\eqref{eq:swissroll-trans} to get $\vect{X}$. 
	\item $\mathcal D_5$: sampling points $\vect{X}_\mathrm{true}$ uniformly from ellipse $\left\{(x, y)\in\mathbb R^2: \left(\frac{x}{6}\right)^2 + \left(\frac{y}{2}\right)^2 = 1\right\}$. The data is obtained by 
		\begin{equation}
			\vect{X} = \left[\vect{X}_\mathrm{true},  \vect{z}\right]
			\nonumber
		\end{equation}
		With $z_i = \exp\left(-\left(\left(\frac{X_{i1}}{3}\right)^2 + X_{i2}^2\right)/2\right)$
	\item $\mathcal D_6$: points are sampled uniformly from $[-1, 1]\times[-2, 2]\times [-4, 4]$.
\end{enumerate}

The experimental results are in Figure \ref{fig:synth-data-s-d-equal-addi} ($\mathcal D_4$
in Figure \ref{fig:d4-embedding-otherview}).

\begin{figure}[htb]
\subfloat[][]
{\includegraphics[width=0.32\textwidth]{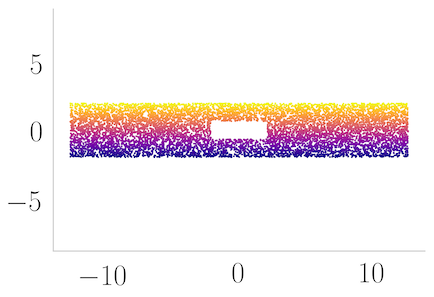}
\label{fig:d2-orig-data}}\hfill
\subfloat[][]
{\includegraphics[width=0.32\textwidth]{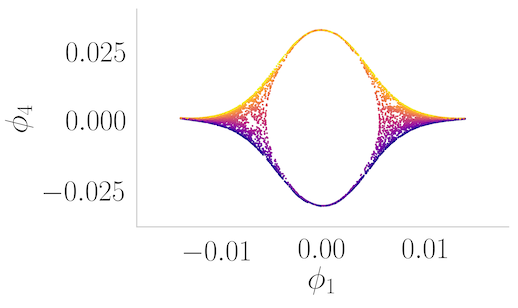}
\label{fig:d2-emb-rank-1}}\hfill
\subfloat[][]
{\includegraphics[width=0.32\textwidth]{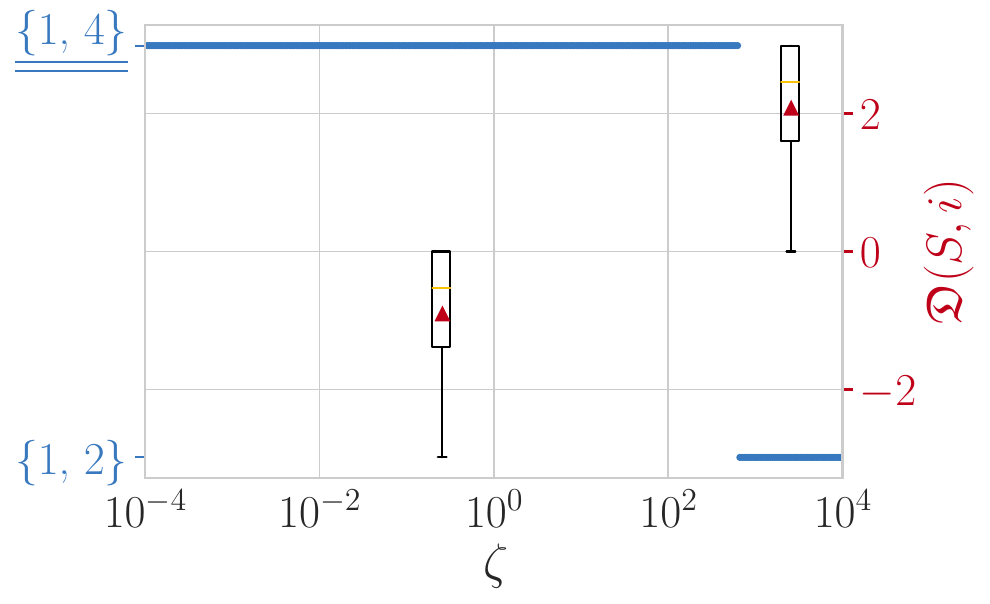}
\label{fig:d2-regu-path}}
\hfill
\subfloat[][]
{\includegraphics[width=0.32\textwidth]{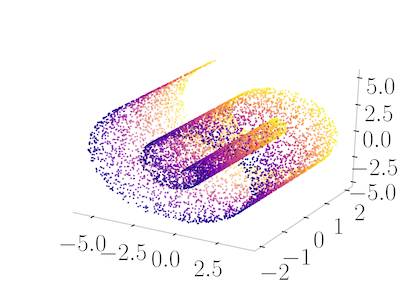}
\label{fig:d3-orig-data}}\hfill
\subfloat[][]
{\includegraphics[width=0.32\textwidth]{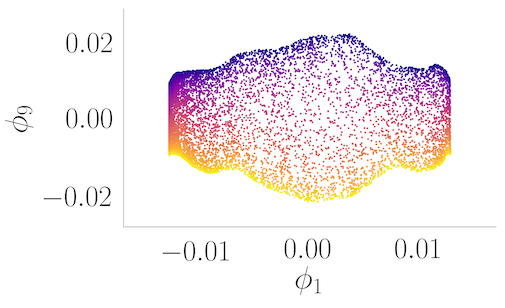}
\label{fig:d3-emb-rank-1}}\hfill
\subfloat[][]
{\includegraphics[width=0.32\textwidth]{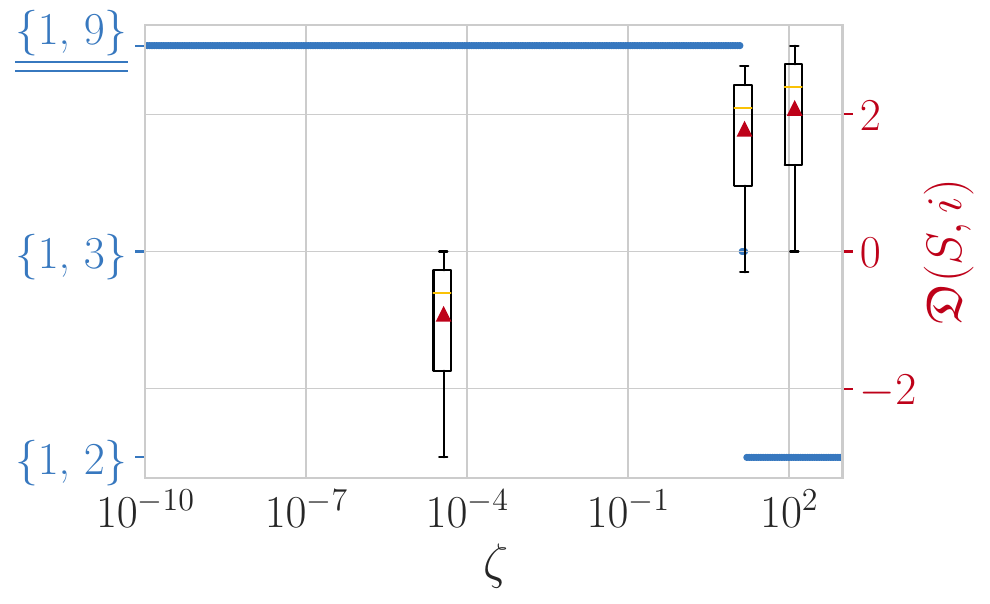}
\label{fig:d3-regu-path}}
\hfill
\subfloat[][]
{\includegraphics[width=0.32\textwidth]{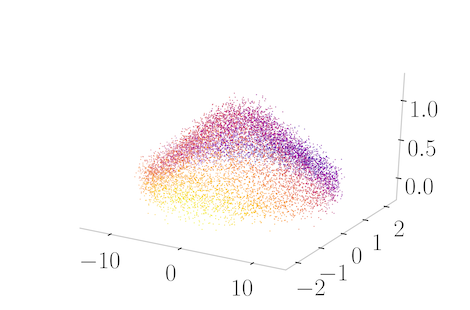}
\label{fig:d5-orig-data}}\hfill
\subfloat[][]
{\includegraphics[width=0.32\textwidth]{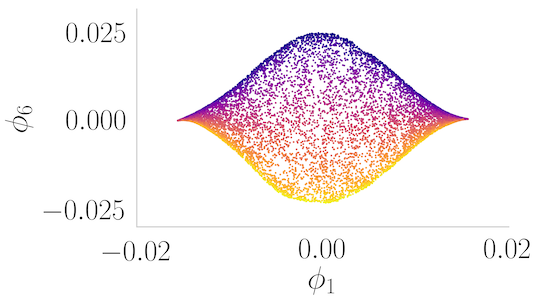}
\label{fig:d5-emb-rank-1}}\hfill
\subfloat[][]
{\includegraphics[width=0.32\textwidth]{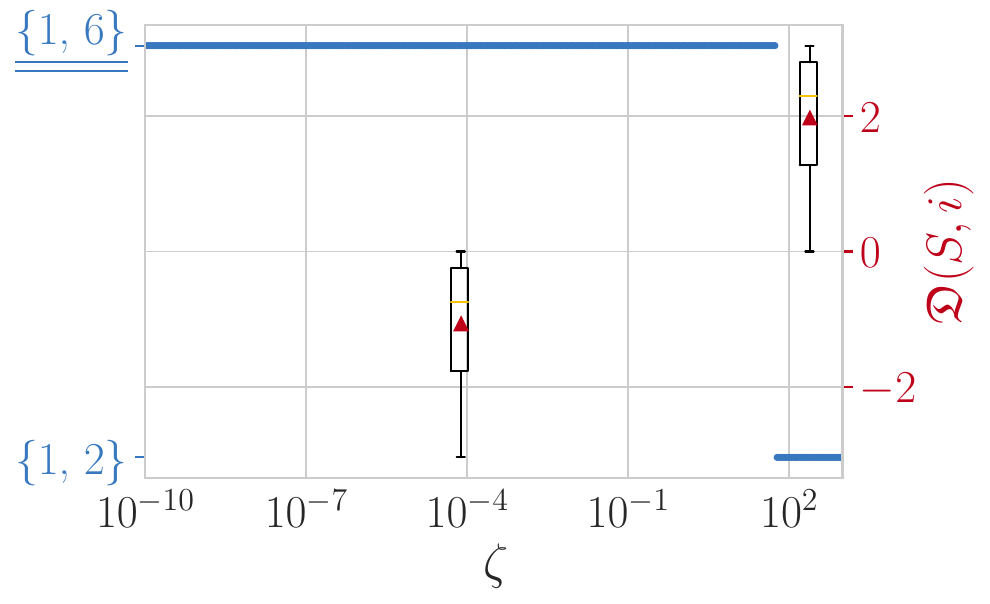}
\label{fig:d5-regu-path}}
\hfill
\subfloat[][]
{\includegraphics[width=0.32\textwidth]{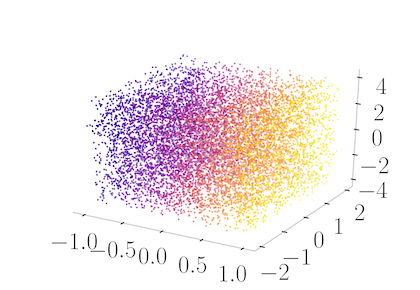}
\label{fig:d6-orig-data}}\hfill
\subfloat[][]
{\includegraphics[width=0.32\textwidth]{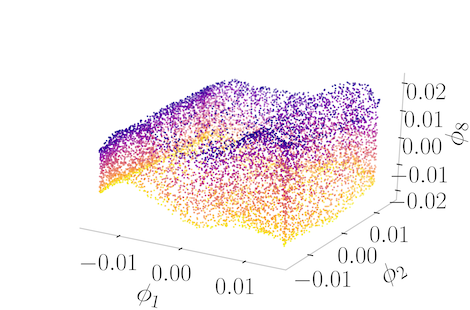}
\label{fig:d6-emb-rank-1}}\hfill
\subfloat[][]
{\includegraphics[width=0.32\textwidth]{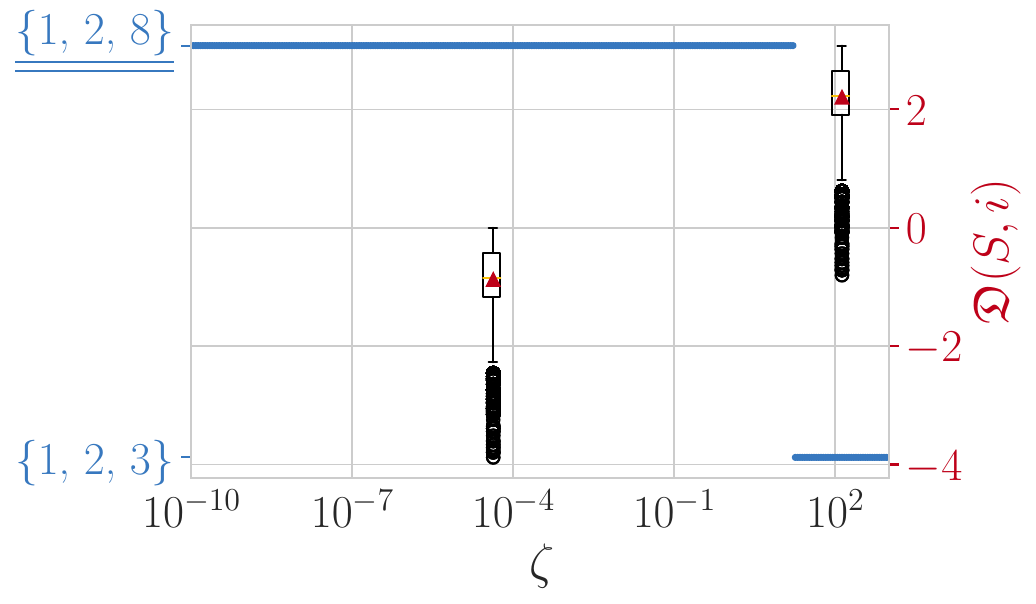}
\label{fig:d6-regu-path}}
\hfill

\caption{Synthetic manifolds with minimum embedding dimension $s$ equals intrinsic dimension $d$. Rows from top to bottom represent {\em two dimensional strip with cavity }(aspect ratio $W/H = 2\pi$), {\em swiss roll}, {\em gaussian manifold} and {\em three dimensional cube} dataset, respectively. Columns from left to right are the original data $\vect{X}$, embedding ${\phi}_{S_*}$ with optimal coordinate sets $S_*$ chosen by \coordsearch~and the regularization path, respectively.}
\label{fig:synth-data-s-d-equal-addi}
\end{figure}

\subsection{Additional experiments on synthetic manifolds with $s > d$}
\begin{figure*}[htb]
\subfloat[][]
{\includegraphics[width=0.32\textwidth]{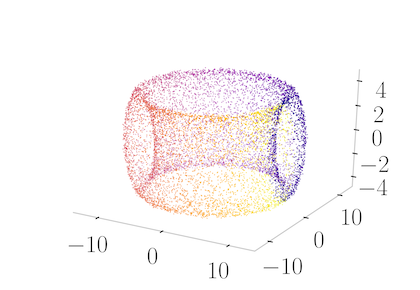}
\label{fig:d8-orig-data}}\hfill
\subfloat[][]
{\includegraphics[width=0.32\textwidth]{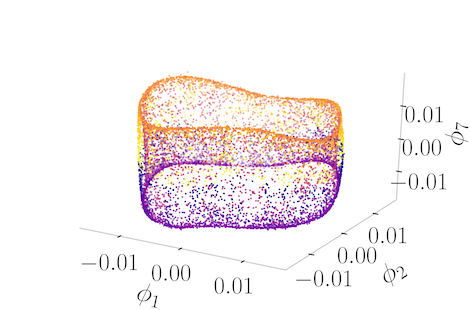}
\label{fig:d8-emb-rank-1}}\hfill
\subfloat[][]
{\includegraphics[width=0.32\textwidth]{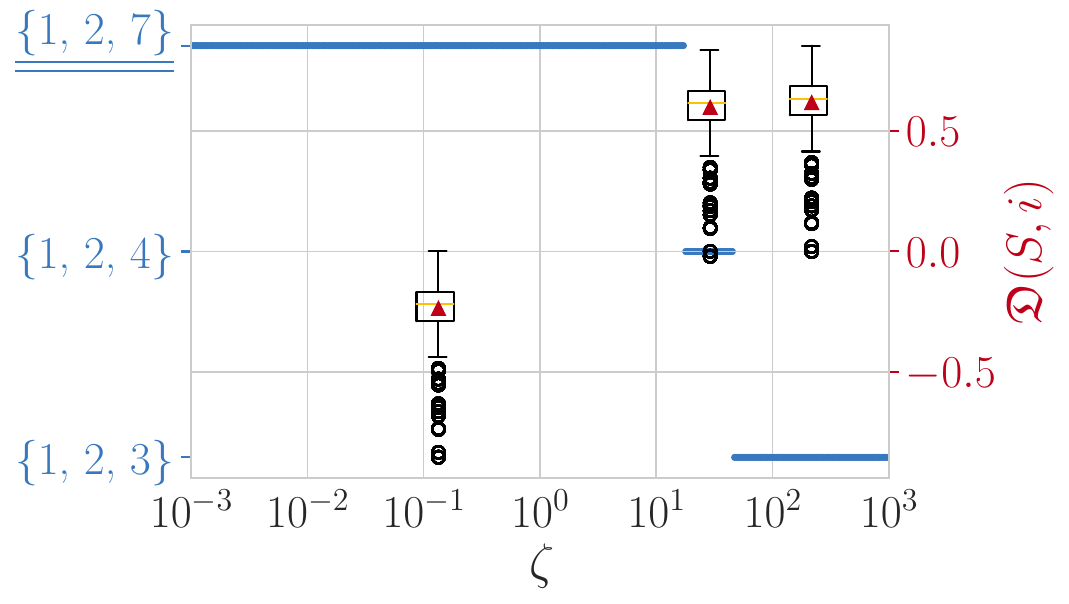}
\label{fig:d8-regu-path}}
\hfill
\subfloat[][]
{\includegraphics[width=0.32\textwidth]{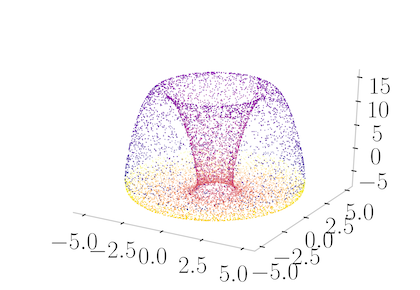}
\label{fig:d9-orig-data}}\hfill
\subfloat[][]
{\includegraphics[width=0.32\textwidth]{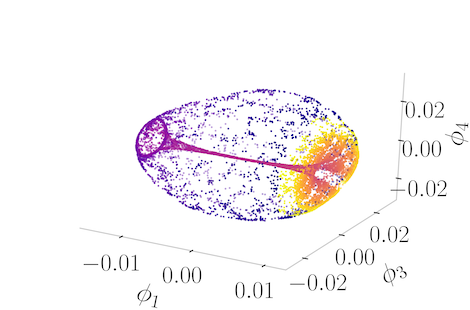}
\label{fig:d9-emb-rank-1}}\hfill
\subfloat[][]
{\includegraphics[width=0.32\textwidth]{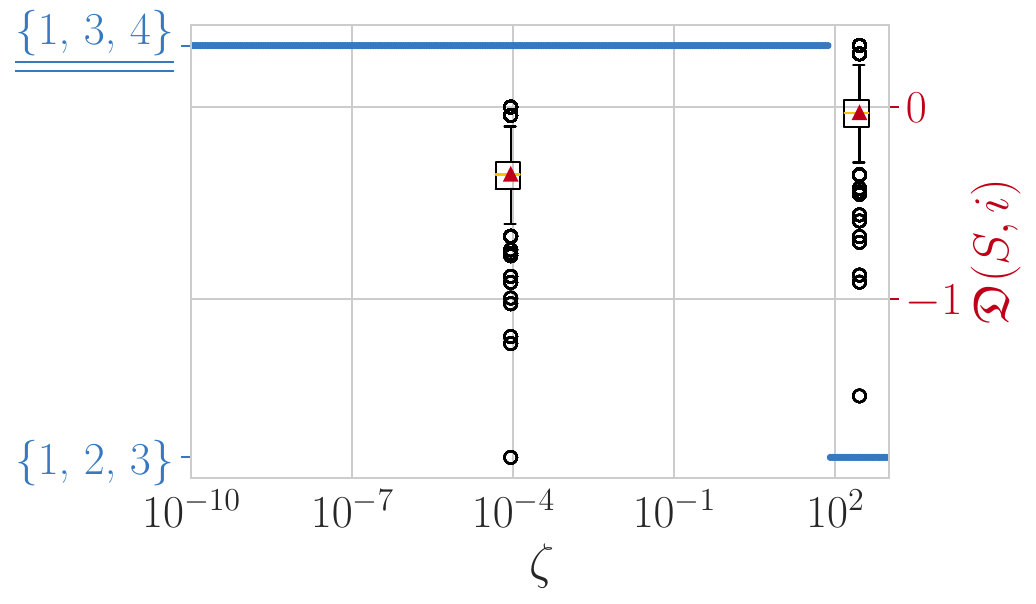}
\label{fig:d9-regu-path}}
\hfill
\subfloat[][]
{\includegraphics[width=0.32\textwidth]{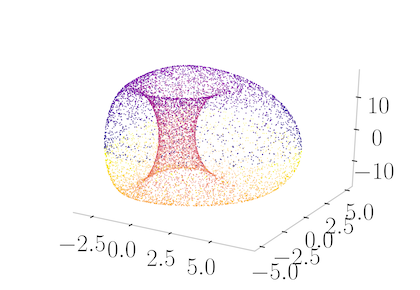}
\label{fig:d10-orig-data}}\hfill
\subfloat[][]
{\includegraphics[width=0.32\textwidth]{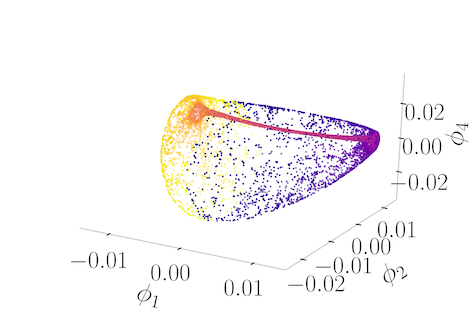}
\label{fig:d10-emb-rank-1}}\hfill
\subfloat[][]
{\includegraphics[width=0.32\textwidth]{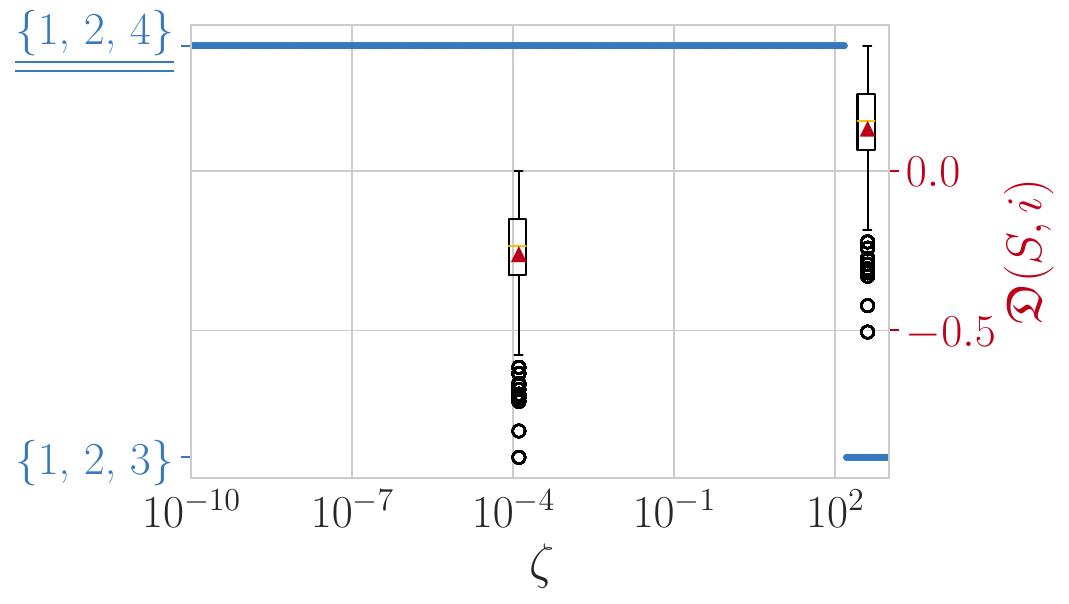}
\label{fig:d10-regu-path}}
\hfill
\subfloat[][]
{\includegraphics[width=0.32\textwidth]{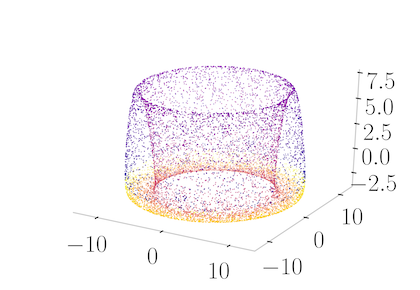}
\label{fig:d11-orig-data}}\hfill
\subfloat[][]
{\includegraphics[width=0.32\textwidth]{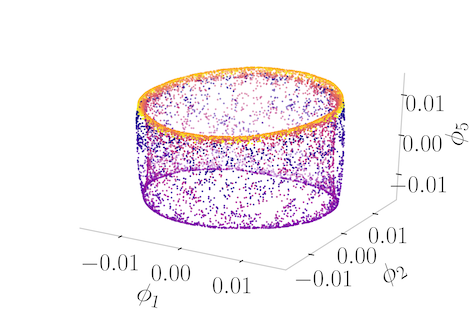}
\label{fig:d11-emb-rank-1}}\hfill
\subfloat[][]
{\includegraphics[width=0.32\textwidth]{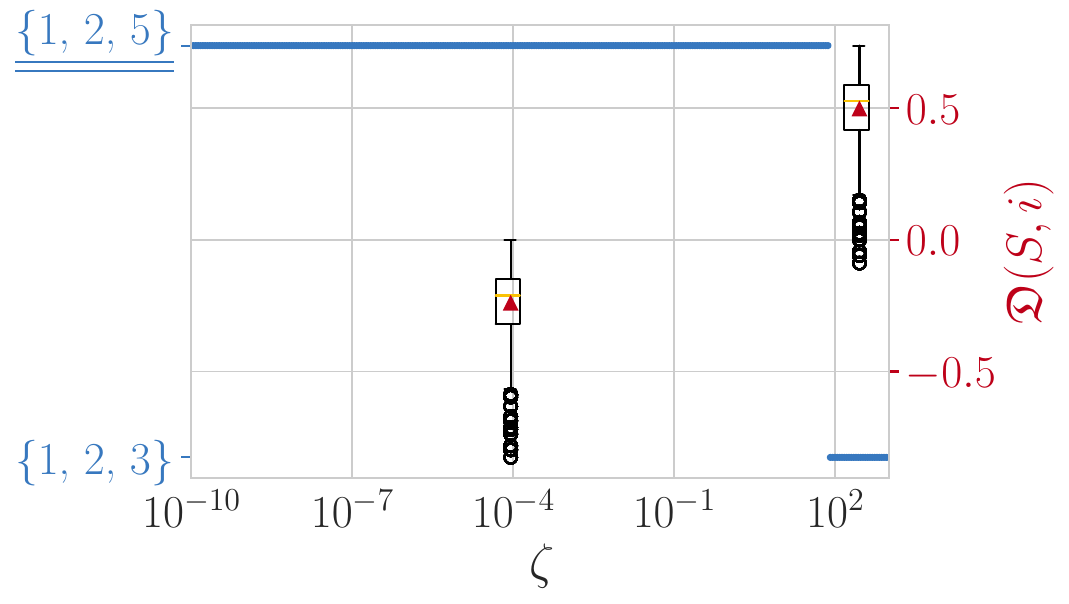}
\label{fig:d11-regu-path}}
\hfill
\subfloat[][]
{\includegraphics[width=0.32\textwidth]{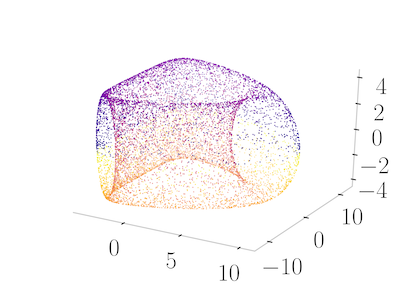}
\label{fig:d12-orig-data}}\hfill
\subfloat[][]
{\includegraphics[width=0.32\textwidth]{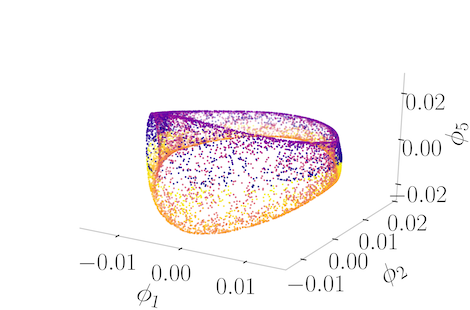}
\label{fig:d12-emb-rank-1}}\hfill
\subfloat[][]
{\includegraphics[width=0.32\textwidth]{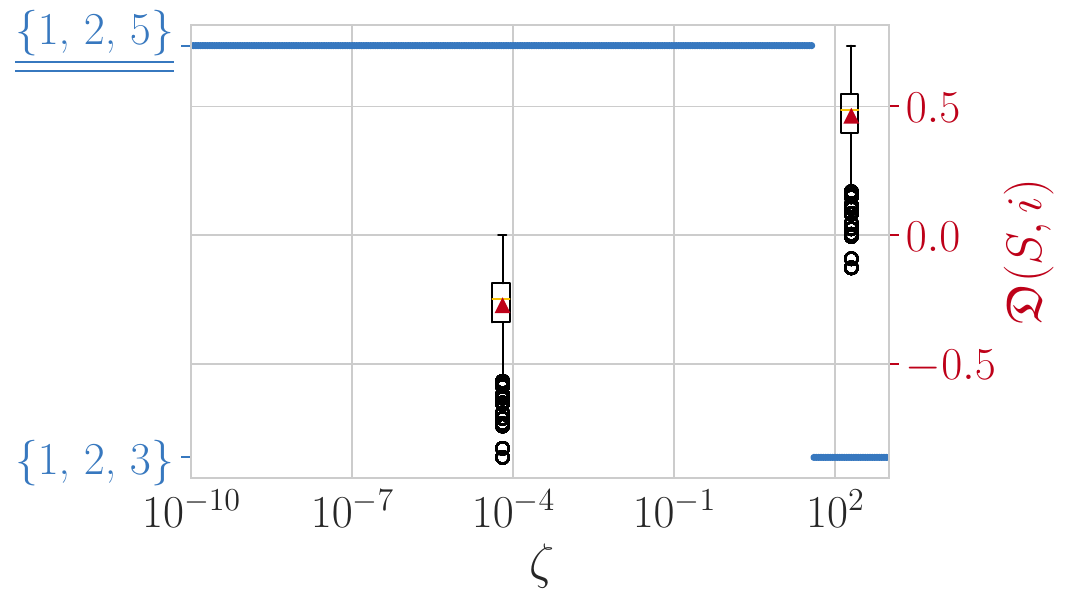}
\label{fig:d12-regu-path}}
\hfill

\caption{Synthetic manifolds with minimum embedding dimension $s$ greater than intrinsic dimension $d$. Rows from top to bottom represent {\em wide torus}, {\em z-asymmetrized high torus}, {\em x-asymmetrized high torus}, {\em z-asymmetrized wide torus} and {\em x-asymmetrized wide torus}, respectively. Columns from left to right are the original data $\vect{X}$, embedding ${\phi}_{S_*}$ with optimal coordinate sets $S_*$ chosen by \coordsearch~and the regularization path, respectively.}
\label{fig:synth-data-s-geq-d-addi}
\end{figure*}

\begin{figure*}[htb]
\centering
\subfloat[][]
{\includegraphics[width=0.97\textwidth]{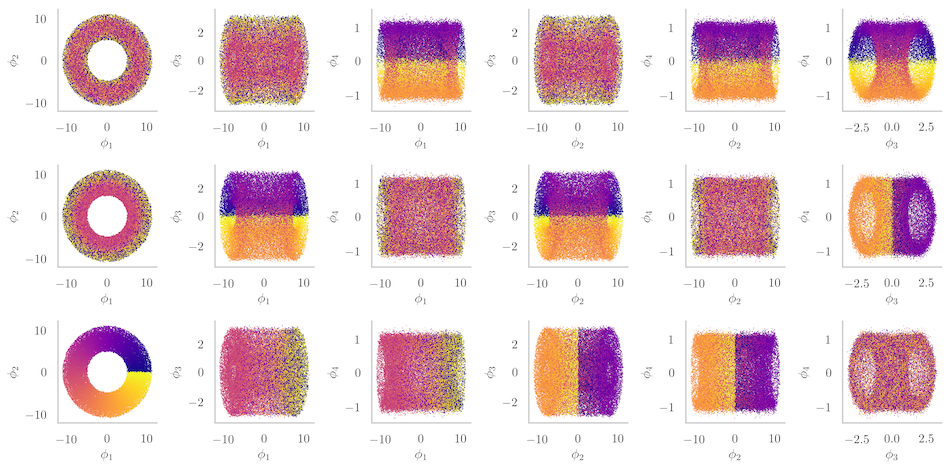}
\label{fig:d14-orig-data}}

\subfloat[][]
{\includegraphics[width=0.97\textwidth]{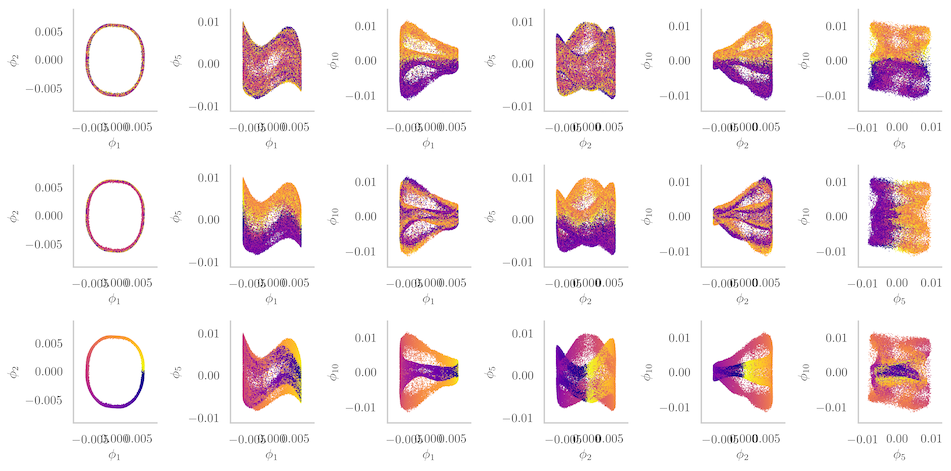}
\label{fig:d14-emb-rank-1}}
\caption{Experiment on {\em three-torus} dataset. (a) Original data $\vect{X}$ of three torus. (b) Embedding ${\phi}_{S_*}$ with optimal coordinate sets $S_*$ chosen by \coordsearch. Rows for both (a) and (b) from top to bottom are embedding colored by the parameterization $(\alpha_1, \alpha_2, \alpha_3)$ in \eqref{eq:three-torus-parameterization}, respectively.}
\label{fig:synth-data-3-torus}
\end{figure*}

\subsubsection{Tori and asymmetrized tori}

A torus can be parametrized by
\begin{equation}
\begin{split}
	x &= (a + b\cos\alpha)\cos\beta \\
	y &= (a + b\cos\alpha)\sin\beta\\
	z &= h\sin(\beta)
\end{split}
\label{eq:torus-parameterization}
\end{equation}

\begin{enumerate}
	\item $\mathcal D_7^*$: sampling $\vect{\alpha}, \vect{\beta}$ uniformly from $[0, 2\pi)$ and generating the torus with $(a, b, h) = (3, 2, 8)$ from \eqref{eq:torus-parameterization}. 
	\item $\mathcal D_8$: generating the torus with $(a, b, h) = (10, 2, 2)$. 
	\item $\mathcal D_9$: generating a high torus with $(a, b, h) = (3, 2, 8)$ and applying the 
		following transformation
		\begin{equation}
			z \gets (z - \min(z))^\gamma / \varsigma \\
			\label{eq:z-asym-trans}
		\end{equation}
		with $(\gamma, \varsigma) = (3, 1500)$
	\item $\mathcal D_{10}$: generating a high torus with $(a, b, h) = (3, 2, 8)$ and applying the 
		following transformation
		\begin{equation}
			x \gets (x - \min(x))^\kappa / \eta \\
			\label{eq:x-asym-trans}
		\end{equation}
		with $(\kappa, \eta) = (2, 10)$
	\item $\mathcal D_{11}$: generating a wide torus with $(a, b, h) = (10, 2, 2)$ and applying transformation \eqref{eq:z-asym-trans} with  $(\gamma, \varsigma) = (3, 50)$. 
	\item $\mathcal D_{12}$: generating a wide torus with $(a, b, h) = (10, 2, 2)$ and applying transformation \eqref{eq:x-asym-trans} with$(\kappa, \eta) = (3, 1000)$. 
\end{enumerate}

The experimental results are in Figure \ref{fig:synth-data-s-geq-d-addi}.

\subsubsection{Three-torus}
\begin{wrapfigure}[5]{r}{0.5\textwidth}
\centering
\includegraphics[width=0.9\linewidth]{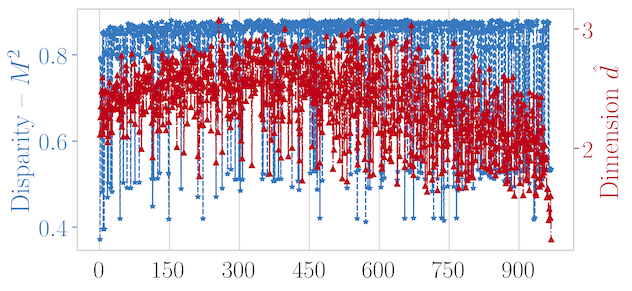}
\caption{$M^2$ and $\hat{d}$ vs. ranking of $\mathcal D_{13}$}
\label{fig:d14-m2-dim}
\end{wrapfigure}

The parameterization of the three torus is
\begin{equation}
\begin{split}
	x_1 & = a_1 \sin\alpha_1 \\
	x_2 & = (a_2 + a_1 \cos\alpha_1)\sin\alpha_2 \\
	x_3 & = (a_3 + (a_2 + a_1 \cos\alpha_1)\cos\alpha_2)\sin\alpha_3 \\
	x_4 & = (a_3 + (a_2 + a_1 \cos\alpha_1)\cos\alpha_2)\cos\alpha_3
\end{split}
\label{eq:three-torus-parameterization}
\end{equation}

To generate $\mathcal D_{13}$, we sample $\vect{\alpha}_k$ uniformly from $[0, 2\pi)$ for $k\in[3]$ and apply the transformation \eqref{eq:three-torus-parameterization} with $(a_1, a_2, a_3) = (8, 2, 1)$. The sample size for this dataset is $n = 50,000$. The experimental result of three-torus can be found in Figure \ref{fig:synth-data-3-torus}. 
    
\begin{figure}[htb]
\subfloat[][$\mathcal D_1$]
{\includegraphics[width=0.32\textwidth]{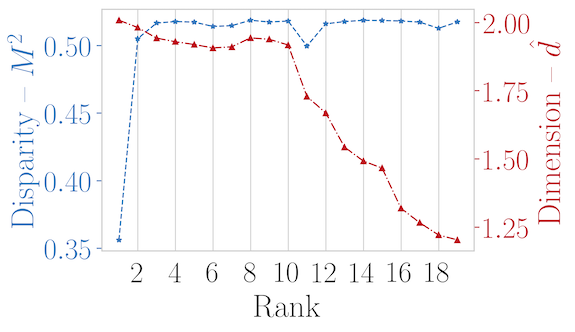}
\label{fig:d1-m2-dim}}\hfill
\subfloat[][$\mathcal D_2$]
{\includegraphics[width=0.32\textwidth]{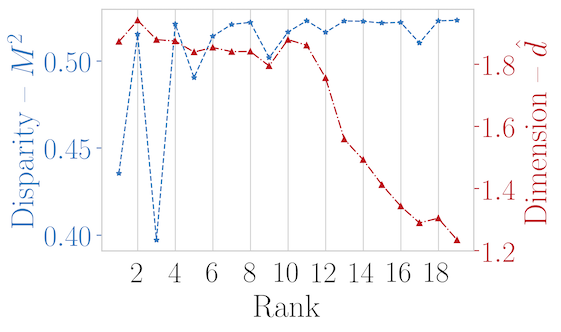}
\label{fig:d2-m2-dim}}\hfill
\subfloat[][$\mathcal D_3$]
{\includegraphics[width=0.32\textwidth]{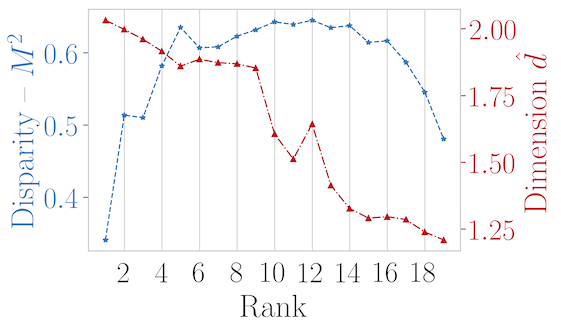}
\label{fig:d3-m2-dim}}
\hfill
\subfloat[][$\mathcal D_4$]
{\includegraphics[width=0.32\textwidth]{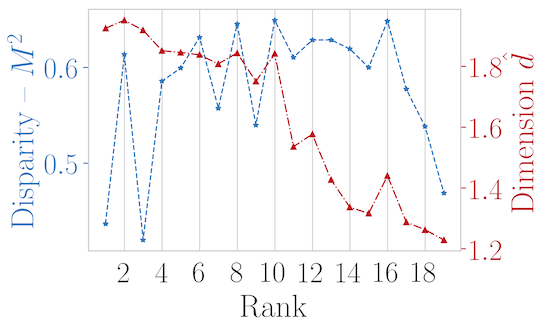}
\label{fig:d4-m2-dim}}\hfill
\subfloat[][$\mathcal D_5$]
{\includegraphics[width=0.32\textwidth]{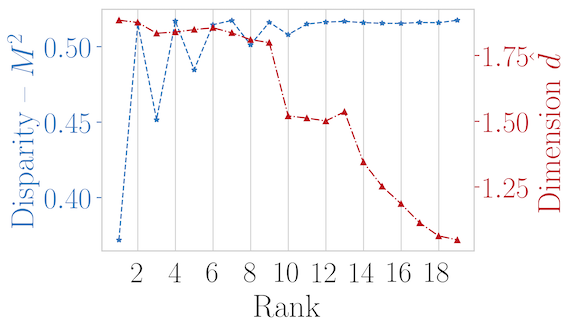}
\label{fig:d5-m2-dim}}\hfill
\subfloat[][$\mathcal D_6$]
{\includegraphics[width=0.32\textwidth]{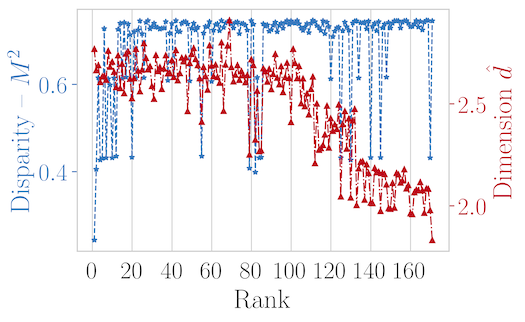}
\label{fig:d6-m2-dim}}
\hfill
\subfloat[][$\mathcal D_7$]
{\includegraphics[width=0.32\textwidth]{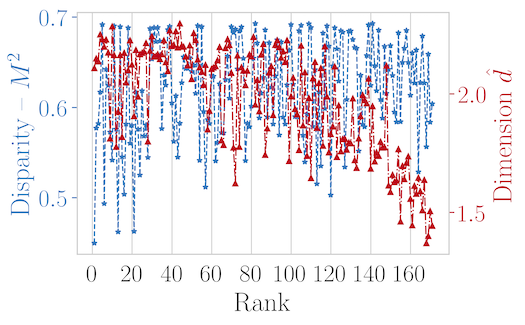}
\label{fig:d7-m2-dim}}\hfill
\subfloat[][$\mathcal D_8$]
{\includegraphics[width=0.32\textwidth]{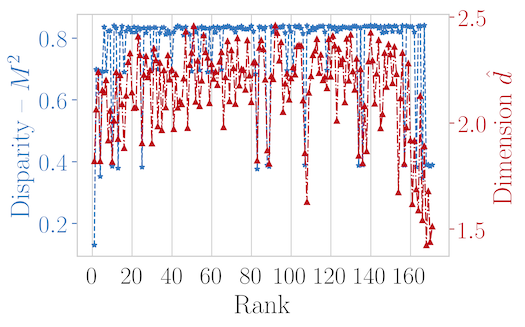}
\label{fig:d8-m2-dim}}\hfill
\subfloat[][$\mathcal D_9$]
{\includegraphics[width=0.32\textwidth]{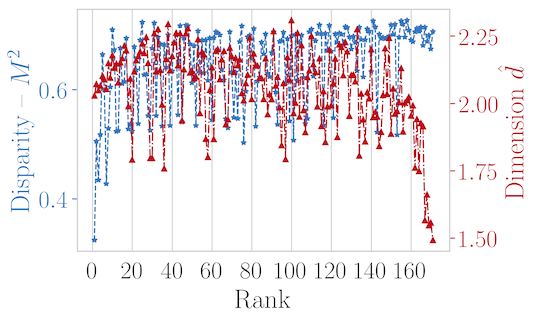}
\label{fig:d9-m2-dim}}
\hfill
\subfloat[][$\mathcal D_{10}$]
{\includegraphics[width=0.32\textwidth]{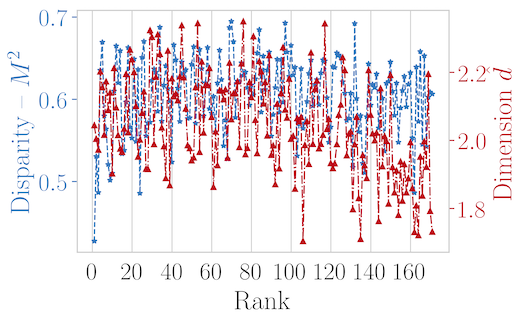}
\label{fig:d10-m2-dim}}\hfill
\subfloat[][$\mathcal D_{11}$]
{\includegraphics[width=0.32\textwidth]{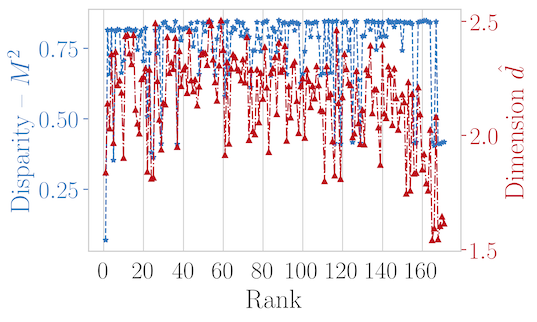}
\label{fig:d11-m2-dim}}\hfill
\subfloat[][$\mathcal D_{12}$]
{\includegraphics[width=0.32\textwidth]{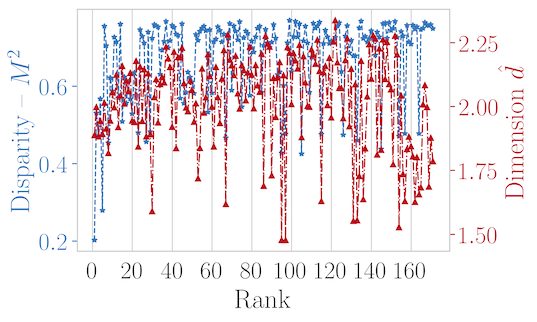}
\label{fig:d12-m2-dim}}
\hfill

\caption{Verification of the correctness of the chosen sets in synthetic manifolds.}
\label{fig:fig-all-m2-dim-verify}
\end{figure}

\subsection{Verification of the chosen subsets on synthetic manifolds}
Unlike 2D strip, the close form solution of the optimal set is oftentimes unknown in general. 
In this section, we verify the correctness of the chosen subset by
reporting the full procrustes distance (disparity score) $M^2$ \cite{DrydenI.L.IanL.2016Ssa:},
which is defined to be the normalized sum of square of the point-wise
difference between the procrustes transformed ground truth data $\vect{X}_\mathrm{true} \in\rrr^{n\times k}$
and the test data $\vect{X}_\mathrm{test}\in\rrr^{n\times k}$. Namely,

\begin{equation}
\begin{split}
	M^2(\vect{X}_\mathrm{true}, \vect{X}_\mathrm{test}) & = \min_{\beta,\vect{\gamma},\vect{\Gamma}} \|\vect{X}_\mathrm{true} - \beta \vect{X}_\mathrm{test}\vect{\Gamma} - \vect{1}_n\vect{\gamma}^\top\|_F^2 \\
	\text{ s.t. } & \beta>0, \vect{\gamma}\in\mathbb R^k, \Gamma\in SO(k)
\end{split}
\end{equation}

\begin{wrapfigure}[11]{r}{0.42\textwidth}
\vspace{-12pt}
\centering
\includegraphics[width=0.37\textwidth]{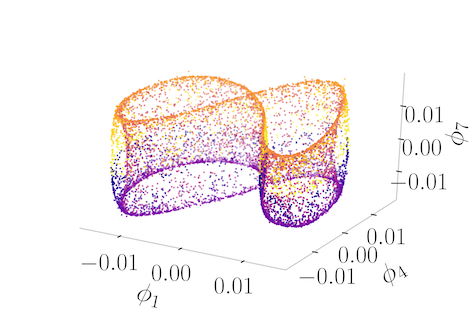}
\caption{Embedding that has crossing.}
\label{fig:d8-emb-rank-2}

\end{wrapfigure}
Here $\beta$ is a scale parameter, $\vect{\gamma}$ is the centering parameter and $\vect{\Gamma}$
is a $k\times k$ rotation matrix. We further require $\|\vect{X}_\mathrm{true}\|_F = 1$ so that 
the disparity score will be between 0 and 1. 
Intuitively, one can expect the optimal choice of eigencoordinates $S_*$ will yield a small
disparity score $M^2(\vect{X}_\mathrm{true}, \phi_{S_*})$, with score increases as the 
coordinate set $S$ contains duplicate 
parameterizations or $\phi_S$ contains {\em knots, crossings}, etc. (e.g., Figure \ref{fig:d8-emb-rank-2}). 
Note that the score can only be calculated when the ground truth data $\vect{X}_\mathrm{true}$ 
is available. 
For dataset without obtainable ground truth, one cannot proposed to 
report the disparity score of $\phi_{S}$ and the original data $\vect{X}$ 
as the proxy of $\vect{X}_\mathrm{true}$,
for $\vect{X}$ might not be a affine transformation of $\vect{X}_\mathrm{true}$, e.g., Swiss roll.
Besides, small $M^2$ given $\phi_S$ does not imply $S$ is optimal, which 
will be clear in the discussion of Figure \ref{fig:d4-emb-rank-3}. 
Besides disparity scores, we will also report the estimated dimension $\hat{d}$. One can expect
the estimated dimension for the optimal set $\dim(\phi_{S_*})$ will be close to the intrinsic 
dimension $d$, while the estimated dimension for sets containing duplicate parameterizations 
will be smaller than the 
intrinsic dimension. One cannot propose to use it as a criterion to choose the optimal set,
for the suboptimal sets can also have estimated dimensions closed to the intrinsic dimension, e.g.,
Figure \ref{fig:d7-llr-rank-1}. 
Throughout the experiment, the dimension estimation method by \cite{levina2005maximum} is 
used for its ability to estimate dimension among all candidate subsets fairly fast.
Blue and red curves in Figure \ref{fig:fig-all-m2-dim-verify} and \ref{fig:d14-m2-dim} 
show the disparity scores and 
estimated 
dimensions versus ranking of coordinate subsets for different synthetic manifolds, respectively. 
As expected, 
we have an increasing in $M^2$ and decreasing in $\hat{d}$ with respect to ranking.
We first highlight that the set that produces the lowest disparity score is not necessarily
optimal, although $S_*$ does yield a small disparity. This can be shown in the example
of $\mathcal D_4$ {\em swiss roll with hole} dataset.
Figure \ref{fig:d4-emb-rank-3} is the embedding ${\phi}_{S_3}$ of $\mathcal D_4$, 
with $S_3$ is ranked third subset in terms of $\loss(S;\zeta)$, that minimizes the
disparity score $M^2$ in $\mathcal D_4$ as shown in Figure \ref{fig:d4-m2-dim}.
This is because the embedding of the subset 
$S_3 = \{1, 11\}$ has larger area on the left, 
compared to Figure Figure \ref{fig:d4-emb-rank-1}.
This balances out the high disparity caused by the 
{\em flipped} region between two {\em knots} in the embedding ${\phi}_{S_3}$ 
when matched with $\vect{X}_\mathrm{true}$. Since all the ranked first subset has low disparity
compared to other subsets, we have higher confidence saying that the ranked 1st subset is indeed
the optimal choice for the synthetic manifolds.

\end{document}